\DeclareMathOperator*{\E}{\mathbb{E}}
\let\Pr\relax
\DeclareMathOperator*{\Pr}{\mathbb{P}}
\DeclareMathOperator*{\ind}{\mathds{1}}
\DeclarePairedDelimiterX{\infdivx}[2]{(}{)}{%
  #1\;\delimsize\|\;#2%
}
\DeclarePairedDelimiter\floor{\lfloor}{\rfloor}
\newcommand{\kldist}{\textnormal{KL}\infdivx}
\newcommand*\diff{\mathop{}\!\mathrm{d}}
\newcommand{\eps}{\epsilon}
\newcommand{\inprod}[1]{\left\langle #1 \right\rangle}
\newcommand{\R}{\mathbb{R}}
\newcommand{\D}{\mathcal{D}}
\newcommand{\N}{\mathcal{N}}
\newcommand{\V}{\mathcal{V}}
\newcommand{\relu}{\textnormal{ReLU}}
\DeclarePairedDelimiter{\norm}{\lVert}{\rVert}
\newtheorem{theorem}{Theorem}
\newtheorem{corollary}{Corollary}
\newtheorem{lemma}{Lemma}
\newtheorem{definition}{Definition}
\newtheorem{claim}{Claim}
\newtheorem{fact}{Fact}
\newtheorem*{app-lemma}{Lemma}
\newtheorem*{app-theorem}{Theorem}
\newtheorem*{app-corollary}{Corollary}
\newtheorem*{app-definition}{Definition}
\title{Learning Distributions Generated by \\One-Layer ReLU Networks}
\author{
Shanshan Wu, Alexandros G.~Dimakis, Sujay Sanghavi\\
\texttt{shanshan@utexas.edu}, 
\texttt{dimakis@austin.utexas.edu},\\
\texttt{sanghavi@mail.utexas.edu} \\
\\
{\it Department of Electrical and Computer Engineering}\\
{\it University of Texas at Austin}\\
}
\date{}
\begin{document}

\maketitle

\begin{abstract}
We consider the problem of estimating the parameters of a $d$-dimensional rectified Gaussian distribution from i.i.d. samples. A rectified Gaussian distribution is defined by passing a standard Gaussian distribution through a one-layer ReLU neural network. We give a simple algorithm to estimate the parameters (i.e., the weight matrix and bias vector of the ReLU neural network) up to an error $\eps\norm{W}_F$ using $\widetilde{O}(1/\eps^2)$ samples and $\widetilde{O}(d^2/\eps^2)$ time (log factors are ignored for simplicity). This implies that we can estimate the distribution up to $\eps$ in total variation distance using $\widetilde{O}(\kappa^2d^2/\eps^2)$ samples, where $\kappa$ is the condition number of the covariance matrix. Our only assumption is that the bias vector is non-negative. Without this non-negativity assumption, we show that estimating the bias vector within any error requires the number of samples at least exponential in the infinity norm of the bias vector. Our algorithm is based on the key observation that vector norms and pairwise angles can be estimated separately. We use a recent result on learning from truncated samples. We also prove two sample complexity lower bounds: $\Omega(1/\eps^2)$ samples are required to estimate the parameters up to error $\eps$, while $\Omega(d/\eps^2)$ samples are necessary to estimate the distribution up to $\eps$ in total variation distance. The first lower bound implies that our algorithm is optimal for parameter estimation. Finally, we show an interesting connection between learning a two-layer generative model and non-negative matrix factorization. Experimental results are provided to support our analysis.
\end{abstract}
\section{Introduction}
Estimating a high-dimensional distribution from observed samples is a fundamental problem in machine learning and statistics.
A popular recent generative approach is to model complex distributions by passing a simple distribution (typically a standard Gaussian) through a neural network. Parameters of the neural network are then learned from data.
Generative Adversarial Networks (GANs)~\cite{GPMXWZCB} and Variational Auto-Encoders (VAEs)~\cite{KW13} are built on this method of modeling high-dimensional distributions. 

Current methods for learning such deep generative models do not have provable guarantees or sample complexity bounds. In this paper we obtain the first such results for a single-layer ReLU generative model. 
Specifically, we study the following problem:
Assume that the latent variable $z$ is selected from a standard Gaussian 
which then drives the generation of samples from a one-layer ReLU activated neural network with weights $W$ and bias $b$. We observe the output samples (but {\em not} the latent variable realizations $z$) and we would like to provably learn the parameters $W$ and $b$. 
More formally: 
\begin{definition}
Let $W\in\R^{d\times k}$ be the weight matrix, and $b\in\R^d$ be the bias vector. We define $\D(W, b)$ as the distribution\footnote{It is also called as a rectified Gaussian distribution, and can be used in non-negative factor analysis~\cite{HK07}.} of the random variable $x\in \R^{d}$ generated as follows:
\begin{equation}
    x = \relu(Wz+b), \text{ where } z\sim \mathcal{N}(0, I_k). 
\end{equation}
Here $z$ is a standard Gaussian random variable in $\R^k$, and $I_k$ is a $k$-by-$k$ identity matrix. 
\label{def:one-layer-dist}
\end{definition}
Given $n$ samples $x_1, x_2, ..., x_n$ from some $\D(W, b)$ with unknown parameters $W$ and $b$, the 
goal is to estimate $W$ and $b$ from the given samples. Since the ReLU operation is not invertible\footnote{If the activation function $\sigma$ (e.g., sigmoid, leaky ReLU, etc.) is invertible, then $\sigma^{-1}(X)\sim \N (b, WW^T)$. In that case the problem becomes learning a Gaussian from samples.}, estimating $W$ and $b$ via maximum likelihood is often intractable.

In this paper, we make the following contributions:
\begin{itemize}
    \item We provide a simple and novel algorithm to estimate the parameters of $\D(W, b)$ from i.i.d. samples, under the assumption that $b$ is non-negative. Our algorithm (Algorithm~\ref{algo:overall}) takes two steps. In Step 1, we estimate $b$ and the row norms of $W$ using a recent result on estimation from truncated samples (Algorithm~\ref{algo:normbias}). In Step 2, we estimate the angles between any two row vectors of $W$ using a simple geometric result (Fact~\ref{fact:angleEst}).
    \item We prove that the proposed algorithm needs $\widetilde{O}(1/\eps^2)$ samples and $\widetilde{O}(d^2/\eps^2)$ time, in order to estimate the parameter $WW^T$ (reps. $b$) within an error $\eps\norm{W}_F^2$ (resp. $\eps\norm{W}_F$) (Theorem~\ref{theorem:main}). This implies that (for the non-degenerate case) the total variation distance between the learned distribution and the ground truth is within an error $\eps$ given $\widetilde{O}(\kappa^2d^2/\eps^2)$ samples, where $\kappa$ is the condition number of $WW^T$ (Corollary~\ref{cor:tv_dist}).
    \item Without the non-negativity assumption on $b$, we show that estimating the parameters of $\D(W, b)$ within any error requires $\Omega(\exp(\norm{b}^2_{\infty}))$ samples (Claim~\ref{claim:negative-b}). Even when the bias vector $b$ has negative components, our algorithm can still be used to recover part of the parameters with a small amount of samples (Section~\ref{sec:negative_b}).
    \item We prove two lower bounds on the sample complexity. The first lower bound (Theorem~\ref{thm:lower-bound-param}) says that
    $\Omega(1/\eps^2)$ samples are required in order to estimate $b$ up to error $\eps\norm{W}_F$, which implies that our algorithm is optimal in estimating the parameters. The second lower bound (Theorem~\ref{thm:lower-bound-dist}) says that $\Omega(d/\eps^2)$ samples are required to estimate the distribution up to total variation distance $\eps$.
    \item We empirically evaluate our algorithm in terms of its dependence over the number of samples, dimension, and condition number (Figure~\ref{fig:err_vs_params}). The empirical results are consistent with our analysis.
    \item We provide a new algorithm to estimate the parameters of a two-layer generative model (Algorithm~\ref{algo:two-layer}). Our algorithm uses ideas from non-negative matrix factorization (Claim~\ref{claim:recover-A}). 
\end{itemize}

{\bf Notation.} We use capital letters to denote matrices and lower-case letters to denote vectors. We use $[n]$ to denote the set $\{1,2,\cdots, n\}$. For a vector $x\in\R^d$, we use $x(i)$ to denote its $i$-th coordinate. The $\ell_p$ norm of a vector is defined as $\norm{x}_p=(\sum_i |x(i)|^p)^{1/p}$. For a matrix $W\in \R^{d\times k}$, we use $W(i,j)$ to denote its $(i,j)$-th entry. We use $W(i, :)\in\R^k$ and $W(:,j)\in\R^d$ to the denote the $i$-th row and the $j$-th column. The dot product between two vectors is  $\inprod{x,y}=\sum_i x(i) y(i)$. For any $a\in\R$, we use $\R_{>a}$ to denote the set $\R_{>a}:=\{x\in\R: x>a\}$. We use $I_k\in\R^{k\times k}$ to denote an identity matrix.
\section{Related Work}
We briefly review the relevant work, and highlight the differences compared to our paper.

{\bf Estimation from truncated samples.} Given a $d$-dimensional distribution $\D$ and a subset $S\subseteq\R^d$, truncation means that we can only observe samples from $\D$ if it falls in $S$. Samples falling outside $S$ (and their counts in proportion) are not revealed. Estimating the parameters of a multivariate normal distribution
from truncated samples is a fundamental problem in statistics and a breakthrough was achieved recently~\cite{DGTZ18}
on this problem. 
This is different from our problem because our samples are formed by {\em projecting} the samples of a multivariate normal distribution onto the positive orthant instead of {\em truncating} to the positive orthant. Nevertheless, a single coordinate of $\D(W,b)$ can be viewed as a truncated univariate normal distribution (Definition~\ref{def:truncated_normal}). We use this observation and leverage on the recent results of~\cite{DGTZ18}
to estimate $b$ and the row norms of $W$ (Section~\ref{sec:est_norm_bias}). 

{\bf Learning ReLU neural networks.} A recent series of work, e.g.,~\cite{GKLW19, GKM18, LY17, ZSJBD17, Sol17}, considers the problem of estimating the parameters of a ReLU neural network given samples of the form $\{(x_i, y_i)\}_{i=1}^n$. Here $(x_i, y_i)$ represents the input features and the output target, e.g., $y_i=\relu(Wx_i+b)$. This is a {\em supervised} learning problem, and hence, is different from our {\em unsupervised} density estimation problem.

{\bf Learning neural network-based generative models.} Many approaches have been proposed to train a neural network to model complex distributions. Examples include GAN~\cite{GPMXWZCB} and its variants (e.g., WGAN~\cite{ACB17}, DCGAN~\cite{RMC15}, etc.), VAE~\cite{KW13}, autoregressive models~\cite{OKK16}, and reversible generative models~\cite{GCBSD18}. All of those methods lack theoretical guarantees and explicit sample complexity bounds. A recent work~\cite{NWH18} proves that training an autoencoder via gradient descent can possibly recover a {\em linear} generative model. This is different from our setting, where we focus on {\em non-linear} generative models. Arya and Ankit~\cite{MR19} also consider the problem of learning from one-layer ReLU generative models. Their modeling assumption is different from ours. They assume that the bias vector $b$ is a random variable whose distribution satisfies certain conditions. Besides, there is no distributional assumption on the hidden variable $z$. By contrast, in our model, both $W$ and $b$ are deterministic and unknown parameters. The randomness only comes from $z$ which is assumed to follow a standard Gaussian distribution.  
\section{Identifiability}
Our first question is whether $W$ is identifiable from the distribution $\D(W, b)$. Claim~\ref{identify-w} below implies that only $WW^T$ can be possibly identified from $\D(W, b)$.
\begin{claim}
For any matrices satisfying $W_1W_1^T = W_2W_2^T$, and any vector $b$, $\D(W_1, b) = \D(W_2, b)$. 
\label{identify-w}
\end{claim}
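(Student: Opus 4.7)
The plan is to reduce the claim to the well-known fact that a linear image of a standard Gaussian is determined by its covariance, together with the observation that applying a fixed deterministic function (here the coordinate-wise ReLU) preserves equality in distribution.

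Concretely, I would first consider the intermediate random vectors $y_i := W_i z_i + b$ for $i=1,2$, where $z_i \sim \N(0, I_{k_i})$ and $k_i$ is the number of columns of $W_i$ (which need not agree between $i=1$ and $i=2$). Each $y_i$ is an affine image of a standard Gaussian, hence is itself a $d$-dimensional Gaussian random vector; a direct computation gives $\E[y_i] = b$ and $\mathrm{Cov}(y_i) = W_i\, \E[z_i z_i^T]\, W_i^T = W_i W_i^T$. By assumption $W_1 W_1^T = W_2 W_2^T$, so $y_1$ and $y_2$ have identical mean and covariance, and since a Gaussian distribution on $\R^d$ is determined by its mean and covariance, $y_1 \stackrel{d}{=} y_2$.

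Next, note that $\relu : \R^d \to \R^d$ is a fixed (deterministic, measurable) function, so $\relu(y_1) \stackrel{d}{=} \relu(y_2)$ whenever $y_1 \stackrel{d}{=} y_2$. By Definition~\ref{def:one-layer-dist} we have $\D(W_i, b) = \mathrm{Law}(\relu(y_i))$, giving $\D(W_1, b) = \D(W_2, b)$ as desired.

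There is essentially no obstacle here; the only mild subtlety is that $W_1$ and $W_2$ may have different column counts, so $z_1$ and $z_2$ live in different spaces and one cannot compare them directly. This is why I route the argument through the covariance of $y_i \in \R^d$ rather than through an equality of the form $W_2 = W_1 Q$ with $Q$ orthogonal, which would require matching column dimensions.
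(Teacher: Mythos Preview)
Your proof is correct. The paper's argument is slightly different: it invokes the existence of an orthogonal matrix $Q\in\R^{k\times k}$ with $W_2=W_1Q$ and then uses rotation invariance of the standard Gaussian, i.e.\ $Qz\sim\N(0,I_k)$. Your route instead goes directly through the law of the intermediate vector $y_i=W_iz_i+b$, using that a Gaussian on $\R^d$ is determined by its mean and covariance. The two arguments are essentially equivalent in spirit, but yours is a bit more robust: as you point out, it does not require $W_1$ and $W_2$ to have the same number of columns, whereas the paper's formulation of $Q\in\R^{k\times k}$ tacitly assumes matching column dimensions. Conversely, the paper's version has the minor advantage of giving an explicit coupling between the two generative processes via $z\mapsto Qz$.
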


\begin{proof}
Since $W_1 W_1^T=W_2W_2^T$, there exists a unitary matrix $Q\in \R^{k\times k}$ that satisfies $W_2 = W_1Q$. Since $z\sim \N(0, I_k)$, we have $Qz\sim\N(0, I_k)$. The claim then follows.
\end{proof}

Identifying the bias vector $b$ from $\D(W, b)$ can be impossible in some cases. For example, if $W$ is a zero matrix, then any negative coordinate of $b$ cannot be identified since it will be reset to zero after the ReLU operation. For the cases when $b$ is identifiable, our next claim provides a lower bound on the sample complexity required to estimate the bias vector to be within an additive error $\eps$. 

\begin{claim}
For any value $\delta >0$, there exists one-dimensional distributions $\D(1, b_1)$ and $\D(1, b_2)$ such that: (a) $|b_1-b_2|=\delta $; (b) at least $\Omega(\exp(b_1^2/2))$ samples are required to distinguish them.
\label{claim:negative-b}
\end{claim}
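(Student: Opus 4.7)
My plan is to construct $b_1, b_2$ to both be very negative, so that under either $\D(1, b_i)$ essentially every sample is $0$, and then quantify the (in)distinguishability via total variation and a standard Le Cam argument.

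Concretely, pick $b_1 < 0$ large in absolute value and set $b_2 = b_1 + \delta$ (still negative for large $|b_1|$), so that condition (a) holds immediately. For $W = 1$, a draw $x \sim \D(1, b)$ is $\max(z + b, 0)$ with $z \sim \N(0,1)$, so $\D(1, b)$ is a mixture: a point mass of size $\Phi(-b)$ at $0$, and on $(0, \infty)$ the density $\phi(x - b)$, whose total weight is $\Phi(b)$. When $b < 0$ with $|b|$ large, the standard Gaussian upper tail bound gives $\Phi(b) = 1 - \Phi(|b|) \leq \tfrac{1}{|b|\sqrt{2\pi}} \exp(-b^2/2)$, so almost all of the mass of $\D(1, b_i)$ sits at the point $0$.

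Next I would bound $d_{TV}(\D(1, b_1), \D(1, b_2))$ by splitting the sample space into $\{0\}$ and $(0, \infty)$. On $\{0\}$, the mass difference is $|\Phi(-b_1) - \Phi(-b_2)| = |\Phi(b_2) - \Phi(b_1)| \leq \Phi(b_1) + \Phi(b_2)$. On $(0, \infty)$, the $L^1$ difference of the two sub-densities is bounded by the sum of their total masses, i.e., again $\Phi(b_1) + \Phi(b_2)$. Combining,
\[
d_{TV}\bigl(\D(1, b_1),\, \D(1, b_2)\bigr) \;\leq\; \Phi(b_1) + \Phi(b_2) \;\leq\; 2\,\Phi(b_1) \;\leq\; C\,\exp(-b_1^2/2),
\]
where I have used that $|b_2| \geq |b_1|$ so $\Phi(b_2) \leq \Phi(b_1)$, and $C$ is an absolute constant hidden in the Mills-ratio tail bound.

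Finally, I would invoke the standard Le Cam two-point testing bound: any test that distinguishes $\D(1, b_1)^{\otimes n}$ from $\D(1, b_2)^{\otimes n}$ with constant success probability requires $d_{TV}\bigl(\D(1, b_1)^{\otimes n}, \D(1, b_2)^{\otimes n}\bigr) = \Omega(1)$, and by subadditivity of total variation on product measures this is at most $n \cdot d_{TV}(\D(1, b_1), \D(1, b_2))$. Rearranging yields $n \geq \Omega(1/d_{TV}) \geq \Omega(\exp(b_1^2/2))$, which is exactly the claimed lower bound.

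The main technical subtlety, rather than an obstacle, is the mixed (atom-plus-continuous) structure of $\D(1, b)$: one has to be careful when writing $d_{TV}$ as a sum over the atom at $0$ and the continuous part on $(0, \infty)$, but because both parts are controlled by the same tail quantity $\Phi(b_1) \lesssim \exp(-b_1^2/2)$, the bound goes through cleanly and the exponent is tight up to the multiplicative $1/|b_1|$ factor from Mills' ratio, which is absorbed into the $\Omega(\cdot)$.
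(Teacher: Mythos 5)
Your overall strategy (bound the one-sample total variation distance and invoke Le Cam's two-point method) is sound and is actually more rigorous than the paper's argument, which informally asserts that distinguishing the two distributions requires observing at least one positive sample; your decomposition of $d_{TV}$ into the atom at $0$ and the continuous part on $(0,\infty)$, each controlled by the Gaussian tail, makes that intuition precise. However, there is a sign error that, as written, breaks the final conclusion. You set $b_1 < 0$ and $b_2 = b_1 + \delta$, so $b_2 > b_1$ and (since both are negative) $\lvert b_2 \rvert = \lvert b_1 \rvert - \delta < \lvert b_1 \rvert$; consequently $\Phi(b_2) > \Phi(b_1)$, contradicting the inequality $\lvert b_2 \rvert \geq \lvert b_1 \rvert$ and hence $\Phi(b_2) \leq \Phi(b_1)$ that you invoke. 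With your choice the correct bound is $d_{TV} \leq \Phi(b_1) + \Phi(b_2) \leq 2\Phi(b_2) \lesssim \exp(-b_2^2/2)$, which yields only $n = \Omega(\exp(b_2^2/2))$, and since $b_2^2 < b_1^2$ this is strictly weaker than the claimed $\Omega(\exp(b_1^2/2))$ (the ratio $\exp(b_2^2/2)/\exp(b_1^2/2) = \exp(-\delta\lvert b_1\rvert + \delta^2/2) \to 0$ as $\lvert b_1\rvert \to \infty$). The fix is immediate: take $b_2 = b_1 - \delta$ (as the paper does), so that $\lvert b_2 \rvert \geq \lvert b_1 \rvert$, $\Phi(b_2) \leq \Phi(b_1)$, and the chain $d_{TV} \leq 2\Phi(b_1) \lesssim \exp(-b_1^2/2)$ goes through, after which the Le Cam step gives exactly $n = \Omega(\exp(b_1^2/2))$.
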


\begin{proof}
Let $b_1 <0 $ and $b_2 = b_1-\delta$. It is easy to check that (a) holds. To show (b), note that the probability of observing a positive (i.e., nonzero) sample from $\D(1, b_1)$ is upper bounded by $\Pr[\relu(z-|b_1|)>0] = \Pr[z>|b_1|] \le \exp(-b_1^2/2)$, where the last step follows from the standard Gaussian tail bound~\cite{wai19}. The same bound holds for $\D(1, b_2)$. To distinguish $\D(1, b_1)$ and $\D(1, b_2)$, we need to observe at least one nonzero sample, which requires $\Omega(\exp(b_1^2/2))$ samples.
\end{proof}

Claim~\ref{claim:negative-b} indicates that in order to estimate the parameters within any error, the sample complexity should scale at least exponentially in $\norm{b}^2_{\infty}$. This is true if $b$ is allowed to take negative values. Intuitively, if $b$ has large negative values, then most of the samples would be zeros. To avoid this exponential dependence, we now assume that the bias vector is \emph{non-negative}. In Section~\ref{sec:algo}, we give an algorithm to provably learn the parameters of $\D(W,b)$ with a sample complexity that is polynomial in $1/\eps$ and does not depend on the values of $b$. In Section~\ref{sec:negative_b}, we show that even when the bias vector has negative coordinates, our algorithm can still be able to recover part of the parameters with a small number of samples.
\section{Algorithm}\label{sec:algo}
In this section, we describe a novel algorithm to estimate $WW^T\in\R^{d\times d}$ and $b\in\R^{d}$ from i.i.d. samples of $\D(W, b)$. Our goal is to estimate $WW^T$ instead of $W$ since $W$ is not identifiable (Claim~\ref{identify-w}). Our only assumption is that the true $b$ is non-negative. As discussed in Claim~\ref{claim:negative-b}, this assumption can potentially avoid the exponential dependence in the values of $b$. Note that our algorithm does not require to know the dimension $k$ of the latent variable $z$. Omitted proofs can be found in the appendix.

\subsection{Intuition}
Let $W(i,:)\in\R^{k}$ be the $i$-th row ($i\in [d]$) of $W$. For any $i < j \in [d]$, the $(i,j)$-th entry of $WW^T$ is 
\begin{equation}
    \inprod{W(i,:), W(j,:)} = \norm{W(i,:)}_2\norm{W(j,:)}_2\cos(\theta_{ij}),
\end{equation}
where $\theta_{ij}$ is the angle between vectors $W(i,:)$ and $W(j,:)$. Our key idea is to estimate the norms $\norm{W(i,:)}_2$, $\norm{W(j,:)}_2$, and the angles $\theta_{ij}$ separately, as shown in Algorithm~\ref{algo:overall}.

Estimating the row norms\footnote{Without loss of generality, we can assume that $\norm{W(i,:)}_2\neq 0$ for all $i\in[d]$. If $W(i,:)$ is a zero vector, one can easily detect that and figure out the corresponding non-negative bias term.} $\norm{W(i,:)}_2$ as well as the $i$-th coordinate of the bias vector $b(i)\in\R$ can be done by only looking at the $i$-th coordinate of the given samples. The idea is to view the problem as estimating the parameters of a univariate normal distribution from truncated samples\footnote{Another idea is to use the median of the samples to estimate the $i$-th coordinate of the bias vector. This approach will give the same sample complexity bound as that of our proposed algorithm.}. This part of the algorithm is described in Section~\ref{sec:est_norm_bias}. To estimate $\theta_{ij}\in [0,\pi)$ for every $i<j\in [d]$, we use a simple fact that the angle between any two vectors can be estimated from their inner products with a random Gaussian vector. Details of this part can be found in Section~\ref{sec:est_angle}.

\begin{algorithm2e}
\caption{Learning a single-layer ReLU generative model}
\label{algo:overall}
\LinesNumbered
\KwIn{$n$ i.i.d. samples $x_1, \cdots, x_n\in\R^d$ from $\D(W^*, b^*)$, $b^*$ is non-negative.}
\KwOut{$\widehat{\Sigma}\in \R^{d\times d}$, $\widehat{b}\in\R^d$.}
\For{$i\leftarrow 1$ \KwTo $d$}{
 $S \leftarrow \{x_m(i), m \in [n]: x_m(i)>0\}$\;
 $\widehat{b}(i), \widehat{\Sigma}(i,i) \leftarrow$ NormBiasEst$(S)$\;
 $\widehat{b}(i)\leftarrow \max\left(0, \widehat{b}(i)\right)$\;
}
\For{$i < j \in [d]$}{
 $\widehat{\theta}_{ij} \leftarrow \pi -\frac{2\pi}{n}\left(\sum_{m=1}^n \ind(x_m(i)>\widehat{b}(i))\ind(x_m(j)>\widehat{b}(j))\right)$\;
 $\widehat{\Sigma}(i,j) \leftarrow \sqrt{\widehat{\Sigma}(i,i)\widehat{\Sigma}(j,j)}\cos(\widehat{\theta}_{ij})$\;
 $\widehat{\Sigma}(j,i) \leftarrow \widehat{\Sigma}(i,j)$\;
}
\end{algorithm2e}

\subsection{Estimate $\norm{W(i,:)}_2$ and $b(i)$}\label{sec:est_norm_bias}

Without loss of generality, we fix $i=1$ and describe how to estimate $\norm{W(1,:)}_2\in\R$ and $b(1)\in\R$ by looking at the first coordinate of the given samples. 

The starting point of our algorithm is the following observation. Suppose $x\sim \D(W, b)$, its first coordinate can be written as 
\begin{equation}
x(1)= \relu(W(1,:)^T z+b(1)) = \relu(y), \text{ where } y\sim \N(b(1), \norm{W(1,:)}_2^2).
\label{def:coordinate}
\end{equation}
Because of the ReLU operation, we can only observe the samples of $y$ when it is positive. Given samples of $x(1)\in\R$, let us keep the samples that have positive values (i.e., ignore the zero samples). Now the problem of estimating $b(1)$ and $\norm{W(1,:)}_2$ is equivalent to estimating the parameters of a one-dimensional normal distribution using samples falling in the set $\R_{>0}:=\{x\in\R: x>0\}$.

Recently Daskalakis et al.~\cite{DGTZ18} gave an efficient algorithm for estimating the mean and covariance matrix of a multivariate Gaussian distribution from truncated samples. We adapt their algorithm for the specific problem described above. Before describing the details, we start with a formal definition of the truncated (univariate) normal distribution.

\begin{definition}
The univariate normal distribution $\N(\mu,\sigma^2)$ has probability density function
\begin{equation}
    \N(\mu, \sigma^2; x) = \frac{1}{\sqrt{2\pi\sigma^2}}\exp\left(-\frac{1}{2\sigma^2}(x-\mu)^2\right),\quad \text{ for  } x\in\R.
\end{equation}
Given a measurable set $S\subseteq\R$, the $S$-truncated normal distribution $\N(\mu,\sigma^2, S)$ is defined as
\begin{equation}
    \N(\mu,\sigma^2, S; x) = 
    \begin{cases} 
      \frac{\N(\mu, \sigma^2; x)}{\int_S \N(\mu, \sigma^2; y) dy} & \text{if } x \in S \\
      0 & \text{if } x \not\in S
  \end{cases}.
\end{equation}
\label{def:truncated_normal}

\end{definition}

We are now ready to describe the algorithm in~\cite{DGTZ18} applied to our problem. The pseudocode is given in Algorithm~\ref{algo:normbias}. The algorithm is essentially maximum likelihood by projected stochastic gradient descent (SGD). Given a sample $x\sim \N(\mu^*,\sigma^{*2}, S)$, let $\ell(\mu, \sigma; x)$ be the negative log-likelihood that $x$ is from $\N(\mu,\sigma^2, S)$, then $\ell(\mu, \sigma; x)$ is a convex function with respect to a reparameterization $v = [1/\sigma^{2}, \mu/\sigma^2]\in\R^2$. We use $\ell(v;x)$ to denote the negative log-likelihood after this reparameterization. Let $\bar{\ell}(v) = \E_x[\ell(v;x)]$ be the expected negative log-likelihood. Although it is intractable to compute $\bar{\ell}(v)$, its gradient $\nabla \bar{\ell}(v)$ with respect to $v$ has a simple unbiased estimator. Specifically, define a random vector $g\in\R^2$ as
\begin{equation}
    g = -
    \begin{bmatrix}
         -x^2/2\\
          x
    \end{bmatrix}  
    +
    \begin{bmatrix}
         -z^2/2\\
          z
    \end{bmatrix}, \text{ where  } x\sim \N(\mu^*,\sigma^{*2}, S), z\sim \N(\mu,\sigma^{2}, S). \label{def:grad}
\end{equation}
We have that $\nabla \bar{\ell}(v) = \E_{x,z}[g]$, i.e., $g$ is an unbiased estimator of $\nabla \bar{\ell}(v)$. 

Eq. (\ref{def:grad}) indicates that one can maximize the log-likelihood via SGD, however, in order to efficiently perform this optimization, we need three extra steps. 

First, the convergence rate of SGD depends on the expected gradient norm $\E[\norm{g}_2^2]$ (Theorem 14.11 of~\cite{SSBD14}). In order to maintain a small gradient norm, we transform the given samples to a new space (so that the empirical mean and variance is well-controlled) and perform optimization in that space. After the optimization is done, the solution is transformed back to the original space. Specifically, given samples $x_1, \cdots, x_n\sim \N(\mu^*, \sigma^{*2}, \R_{>0})$, we transform them as
\begin{equation}
    x_i \to \frac{x_i - \widehat{\mu}_0}{\widehat{\sigma}_0}, \text{  where  } \widehat{\mu}_0 = \frac{1}{n}\sum_{i=1}^n x_i,\;
    \widehat{\sigma}_0^2 = \frac{1}{n}\sum_{i=1}^n (x_i-\widehat{\mu}_0)^2.
    \label{def:transform}
\end{equation}
In the transformed space, the problem becomes estimating parameters of a normal distribution with samples truncated to the set $\R_{>-\widehat{\mu}_0/\widehat{\sigma}_0}=\{x\in\R: x>-\widehat{\mu}_0/\widehat{\sigma}_0\}$.

Second, we need to control the strong-convexity of the objective function. This is done by projecting the parameters onto a domain where the strong-convexity is bounded. The domain $D_r$ is parameterized by $r>0$ and is defined as
\begin{equation}
    D_r = \{v\in\R^2: 1/r\le v(1)\le r, |v(2)|\le r\}.
    \label{def:domain}
\end{equation}
According to~\cite[Section 3.4]{DGTZ18}, $r = O(\ln(1/\alpha)/\alpha^2)$ is a hyper-parameter that only depends on $\alpha=\int_S \N(\mu^*, \sigma^{*2}; y) dy$ (i.e., the probability mass of original truncation set $S$). In our setting, we have $\alpha\ge 1/2$. This is because the original truncation set is $\R_{>0}$ and $\mu^* = b(1) \ge 0$. A large value of $r$ would lead to a small strong-convexity parameter. In our experiments, we set $r=3$. 

Third, a single run of the projected SGD algorithm only guarantees a constant probability of success. To amplify the probability of success to $1-\delta/d$, a standard procedure is to repeat the algorithm $O(\ln(d/\delta))$ times. This procedure is illustrated in Step 2-5 in Algorithm~\ref{algo:normbias}.

\hspace*{-\parindent}%
\begin{minipage}{0.48\textwidth}
\begin{algorithm2e}[H]
\caption{NormBiasEst}
\label{algo:normbias}
\LinesNumbered
\KwIn{Samples from $\N(\mu,\sigma^2, \R_{>0})$.}
\KwOut{$\widehat{\mu}\in \R$, $\widehat{\sigma^2}\in\R$.}
Shift and rescale the samples using (\ref{def:transform})\;
Split the samples into $B = O(\ln(d/\delta))$ batches\;
For batch $i\in[B]$, run ProjSGD to get $v_i\in\R^2$\;
$S\leftarrow \{v_1, \cdots, v_B\}$\;
$\widehat{v}\leftarrow \arg\min_{v_i\in S} \sum_{j\in[B]}\norm{v_i-v_j}_2$\;
Transform $\widehat{v}$ back to the original space\;
$\widehat{\mu}\leftarrow \widehat{v}(2)/\widehat{v}(1)$, $\widehat{\sigma^2}\leftarrow 1/\widehat{v}(1)$\;
\end{algorithm2e}
\end{minipage}
\hfill
\begin{minipage}{0.47\textwidth}
\begin{algorithm2e}[H]
\caption{ProjSGD}
\label{algo:projSGD}
\LinesNumbered
\KwIn{$T=\widetilde{O}(\ln(d/\delta)/\eps^2)$, $\lambda>0$.}
\KwOut{$v\in\R^2$.}
Initialize $v^{(0)} = [1,0]\in\R^2$\;
\For{$t\leftarrow 1$ \KwTo $T$}{
 $g^{(t)}\leftarrow$ Estimate the gradient using (\ref{def:grad})\;
 $v^{(t)} \leftarrow v^{(t-1)} - g^{(t)}/(\lambda\cdot t)$\;
 $v^{(t)} \leftarrow$ Project $v^{(t)}$ to the domain in (\ref{def:domain})\; 
}
$v\leftarrow \sum_{t=1}^T v^{(t)}/T$\;
\end{algorithm2e}
\end{minipage}

\begin{lemma}
For any $\eps \in (0,1)$ and $\delta\in (0,1)$, Algorithm~\ref{algo:overall} takes $n=\widetilde{O}\left(\frac{1}{\eps^2}\ln(\frac{d}{\delta})\right)$ samples from $\D(W^*, b^*)$ (for some non-negative $b^*$) and outputs $\widehat{b}(i)$ and $\widehat{\Sigma}(i,i)$ for all $i\in [d]$ that satisfy
\begin{equation}
    (1-\eps)\norm{W^*(i,:)}_2^2\le \widehat{\Sigma}(i,i) 
    \le (1+\eps) \norm{W^*(i,:)}_2^2,\quad
    |\widehat{b}(i) - b^*(i)| \le \eps \norm{W^*(i,:)}_2
    \label{res:normbias}
\end{equation}
with probability at least $1-\delta$.
\label{lemma:normbias}
\end{lemma}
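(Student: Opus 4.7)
The plan is to reduce the lemma, coordinate by coordinate, to the truncated--Gaussian estimation result of Daskalakis et al.\ (DGTZ18), and then stitch the $d$ per-coordinate guarantees together with a union bound. Fix a coordinate $i$. By Eq.~(\ref{def:coordinate}), the latent scalar $y = W^*(i,:)^T z + b^*(i)$ is distributed as $\N(b^*(i), \norm{W^*(i,:)}_2^2)$, and the positive samples of $x_m(i)$ are exactly i.i.d.\ draws from the truncation $\N(\mu^*, \sigma^{*2}, \R_{>0})$ with $\mu^*=b^*(i)$ and $\sigma^{*2}=\norm{W^*(i,:)}_2^2$. Because $b^*(i)\ge 0$, the truncation mass satisfies $\alpha := \Pr[y>0] \ge 1/2$, so a standard Chernoff bound shows that with $n=\widetilde O(\ln(d/\delta)/\eps^2)$ raw samples, every coordinate retains at least $\widetilde\Omega(\ln(d/\delta)/\eps^2)$ positive samples with probability $1-\delta/2$. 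From here we are in exactly the setting that NormBiasEst (Algorithm~\ref{algo:normbias}) is designed for.

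Next, I would invoke the DGTZ18 projected-SGD guarantee on each batch. After the affine rescaling in~(\ref{def:transform}), the empirical mean and variance of the transformed samples are $0$ and $1$, which is what lets one bound $\E[\norm{g}_2^2]$ uniformly on the domain $D_r$ in~(\ref{def:domain}); since $\alpha\ge 1/2$, the parameter $r=O(1)$ suffices. One run of ProjSGD with $T=\widetilde O(\ln(d/\delta)/\eps^2)$ steps then outputs, with constant probability, a reparameterized vector $\widetilde v$ satisfying $\norm{\widetilde v - v^*}_2\le O(\eps)$ in the transformed space. The boosting step (lines~2--5 of Algorithm~\ref{algo:normbias}) then amplifies this to probability $1-\delta/(2d)$: run $B=O(\ln(d/\delta))$ independent copies and output the geometric median, which must lie within $O(\eps)$ of $v^*$ whenever a strict majority of the $v_i$'s do, by the standard median-of-means argument.

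The remaining step is to translate the $O(\eps)$ error on $v = (1/\sigma^2, \mu/\sigma^2)$ back to errors on $\sigma^2$ and $\mu$, and then to undo the affine transformation~(\ref{def:transform}). Since $r=O(1)$, both coordinates of $v^*$ lie in a compact set bounded away from the boundary of $D_r$, so the map $v\mapsto (\mu,\sigma^2)$ is $O(1)$-Lipschitz at $v^*$; this gives $|\widehat{\sigma^2}/\sigma^{*2}-1|\le O(\eps)$ and $|\widehat\mu-\mu^*|\le O(\eps)\,\sigma^*$ in the transformed space. Pulling back through~(\ref{def:transform}) multiplies the variance error by $\widehat\sigma_0^2$ and the mean error by $\widehat\sigma_0$; since $\widehat\sigma_0^2$ concentrates around $\mathrm{Var}[y\mid y>0]\le \sigma^{*2}=\norm{W^*(i,:)}_2^2$ (this is where non-negativity of $b^*$ is used again, to keep the truncated variance comparable to $\sigma^{*2}$), we obtain exactly the bounds in~(\ref{res:normbias}) after rescaling $\eps$ by a constant. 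Finally, the projection $\widehat b(i)\leftarrow \max(0,\widehat b(i))$ can only decrease the error since $b^*(i)\ge 0$. A union bound over the $d$ coordinates and the sampling event completes the argument with total failure probability $\delta$.

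The part that needs the most care is the variance-to-norm translation in the last step: the affine rescaling is defined from the samples themselves, so one has to show that $\widehat\sigma_0$ concentrates at a scale $\Theta(\sigma^*)$ (using that $\alpha\ge 1/2$ keeps the conditional second moment within a constant factor of $\sigma^{*2}$) and then propagate this multiplicative error through the reparameterization without blowing up the additive $\eps$. Everything else is a fairly mechanical application of the DGTZ18 theorem plus a Chernoff/union bound.
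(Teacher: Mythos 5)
Your proof is correct and follows essentially the same approach as the paper: reduce each coordinate to estimating a univariate normal from $\R_{>0}$-truncated samples, exploit $b^*(i)\ge 0$ to guarantee at least a constant fraction of positive samples and a bounded truncation parameter $r$, invoke the Daskalakis et al.\ guarantee (which the paper cites as Theorem~1 of~\cite{DGTZ18} as a black box, whereas you unroll its internal SGD/reparameterization/boosting steps), and finish with Hoeffding plus a union bound over the $d$ coordinates. The extra detail you supply about the affine rescaling, the map $v\mapsto(\mu,\sigma^2)$, and $\widehat\sigma_0=\Theta(\sigma^*)$ is all content subsumed by the cited theorem, so there is no substantive difference in method.
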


\subsection{Estimate $\theta_{ij}$} \label{sec:est_angle}
To estimate the angle between any two vectors $W^*(i,:)$ and $W^*(j,:)$ (where $i\neq j\in[d]$), we will use the following result. 
\begin{fact}\emph{(Lemma 6.7 in~\cite{WS11}).}
Let $z\sim \N(0, I_k)$ be a standard Gaussian random variable in $\R^k$. For any two non-zero vectors $u, v\in\R^k$, the following holds:
\begin{equation}
    \Pr_{z\sim \N(0, I_k)}[u^Tz>0\textnormal{ and }v^Tz>0] = \frac{\pi-\theta}{2\pi}, \text{  where  } \theta = \arccos\left(\frac{\inprod{u,v}}{\norm{u}_2\norm{v}_2}\right).
\end{equation}
\label{fact:angleEst}
\end{fact}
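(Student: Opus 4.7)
The plan is to reduce the computation to a two-dimensional geometric fact by exploiting rotational invariance of the standard Gaussian, and then compute the probability as the angular measure of a wedge on the unit circle. I would assume any result stated earlier in the excerpt, but here no earlier lemma is needed — this is essentially a self-contained geometric probability computation.

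First, observe that the event in question depends on $z$ only through its projection onto the two-dimensional subspace $U = \textnormal{span}(u,v)$, since $u^T z = u^T P_U z$ and $v^T z = v^T P_U z$, where $P_U$ is orthogonal projection onto $U$. By the rotational invariance of $\N(0,I_k)$, the random vector $P_U z$, expressed in any orthonormal basis of $U$, is distributed as $\N(0,I_2)$. So without loss of generality I may work in $\R^2$ with $z \sim \N(0,I_2)$ and choose coordinates so that $u$ is a positive multiple of $(1,0)$ and $v$ is a positive multiple of $(\cos\theta,\sin\theta)$, where $\theta \in [0,\pi)$ is the angle between $u$ and $v$.

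Second, I would write $z = R(\cos\Phi,\sin\Phi)$ in polar form. Since $z \sim \N(0,I_2)$ is rotationally symmetric, the angle $\Phi$ is uniformly distributed on $[0,2\pi)$ and is independent of $R$. The event $u^T z > 0$ becomes $\cos\Phi > 0$, i.e., $\Phi \in (-\pi/2, \pi/2) \pmod{2\pi}$, and the event $v^T z > 0$ becomes $\cos(\Phi-\theta) > 0$, i.e., $\Phi \in (\theta-\pi/2,\theta+\pi/2) \pmod{2\pi}$.

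Third, intersect these two arcs. Since $\theta \in [0,\pi)$, the intersection is the arc $(\theta - \pi/2,\pi/2)$, which has angular length $\pi - \theta$. Therefore
\begin{equation}
\Pr[u^T z > 0 \text{ and } v^T z > 0] = \Pr\!\left[\Phi \in (\theta - \pi/2,\pi/2)\right] = \frac{\pi - \theta}{2\pi},
\end{equation}
as claimed. There is no real obstacle: the only care needed is handling the cases $\theta = 0$ (the two events coincide, probability $1/2$, matches) and $\theta \to \pi$ (the events are nearly complementary, probability $\to 0$, also matches), and verifying that the reduction to the two-dimensional subspace via the projection is correctly justified by the rotational invariance of the isotropic Gaussian.
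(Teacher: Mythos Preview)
Your argument is correct and is the standard proof of this classical fact. Note, however, that the paper does not actually prove this statement: it is stated as Fact~\ref{fact:angleEst} with a citation to Lemma~6.7 of~\cite{WS11} and used as a black box. So there is no ``paper's own proof'' to compare against; you have simply supplied the (correct) underlying argument that the paper outsources to the reference.

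One very minor remark: when $u$ and $v$ are parallel or anti-parallel, $\textnormal{span}(u,v)$ is one-dimensional, so the reduction to $\R^2$ as written does not literally apply; but you already handle those cases ($\theta=0$ and $\theta\to\pi$) separately at the end, so this is not a gap.
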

Fact~\ref{fact:angleEst} says that the angle between any two vectors can be estimated from the sign of their inner products with a Gaussian random vector. Let $x\sim \D(W^*, b^*)$, since $b^*$ is assumed to be non-negative, Fact~\ref{fact:angleEst} gives an unbiased estimator for the pairwise angles.
\begin{lemma}
Suppose that $x\sim \D(W^*, b^*)$ and that $b^*\in\R^d$ is non-negative, for all $i\neq j \in [d]$,
\begin{equation}
    \Pr_{x\sim \D(W^*, b^*)}[x(i)>b^*(i)\textnormal{ and }x(j)>b^*(j)] = \frac{\pi - \theta^*_{ij}}{2\pi},
    \label{eqn:exactAngleEst}
\end{equation}
where $\theta^*_{ij}$ is the angle between vectors $W^*(i,:)$ and $W^*(j,:)$.
\label{lemma:exactAngleEst}
\end{lemma}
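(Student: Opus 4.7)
The plan is to reduce the event $\{x(i) > b^*(i) \text{ and } x(j) > b^*(j)\}$ to the event $\{W^*(i,:)^T z > 0 \text{ and } W^*(j,:)^T z > 0\}$, and then invoke Fact~\ref{fact:angleEst} directly. The role of the non-negativity assumption on $b^*$ is exactly to make this reduction valid.

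First, I would fix any coordinate $i \in [d]$ and show that under $b^*(i) \geq 0$, the event $\{x(i) > b^*(i)\}$ coincides (almost surely) with $\{W^*(i,:)^T z > 0\}$. For this I would split on the sign of $W^*(i,:)^T z + b^*(i)$: if this quantity is positive then $x(i) = W^*(i,:)^T z + b^*(i)$ and the inequality $x(i) > b^*(i)$ is equivalent to $W^*(i,:)^T z > 0$; otherwise $x(i) = 0 \leq b^*(i)$, so $x(i) > b^*(i)$ cannot hold. Combining the two cases (and noting that $W^*(i,:)^T z = 0$ occurs with probability zero whenever $W^*(i,:)$ is nonzero, as we assumed in Section~\ref{sec:est_norm_bias}) gives the claimed coincidence of events.

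Applying the same reduction to coordinate $j$ and intersecting, I obtain
\begin{equation*}
\Pr_{x \sim \D(W^*, b^*)}[x(i) > b^*(i) \text{ and } x(j) > b^*(j)] \;=\; \Pr_{z \sim \N(0, I_k)}[W^*(i,:)^T z > 0 \text{ and } W^*(j,:)^T z > 0].
\end{equation*}
Then Fact~\ref{fact:angleEst}, applied to the vectors $u = W^*(i,:)$ and $v = W^*(j,:)$, evaluates the right-hand side to $(\pi - \theta^*_{ij})/(2\pi)$, which is exactly~(\ref{eqn:exactAngleEst}).

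There is no real obstacle here; the proof is essentially a one-line consequence of Fact~\ref{fact:angleEst} once the coordinate-wise event equivalence is established. The only mild care needed is handling the degenerate corner where $W^*(i,:)^T z = 0$ or $W^*(i,:) = 0$; both are handled either by the probability-zero argument or by the convention that zero-norm rows are detected and handled separately, as noted in the footnote at the beginning of Section~\ref{sec:est_norm_bias}.
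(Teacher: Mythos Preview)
Your proposal is correct and follows essentially the same approach as the paper: reduce $\{x(i)>b^*(i)\}$ to $\{W^*(i,:)^T z>0\}$ using non-negativity of $b^*$, then apply Fact~\ref{fact:angleEst}. The paper's proof is the one-line version of your argument, omitting the case split and the remarks on degenerate events that you spell out.
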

\begin{proof}
Since $x(i) = \relu\left(W^*(i,:)^Tz+b^*(i)\right)$ and $b^*$ is non-negative, we have
\begin{equation}
    \textnormal{LHS}= \Pr_{z\sim \N(0, I_k)}[W^*(i,:)^Tz>0\textnormal{ and }W^*(j,:)^Tz>0] = \frac{\pi - \theta^*_{ij}}{2\pi} = \textnormal{RHS},
\end{equation}
where the second equality follows from Fact~\ref{fact:angleEst}.
\end{proof}
Lemma~\ref{lemma:exactAngleEst} gives an unbiased estimator of $\theta^*_{ij}$, however, it requires knowing the true bias vector $b^*$. In the previous section, we give an algorithm that can estimate $b^*(i)$ within an additive error of $\eps \norm{W^*(i,:)}_2$ for all $i\in[d]$. Fortunately, this is good enough for estimating $\theta^*_{ij}$ within an additive error of $\eps$, as indicated by the following lemma.

\begin{lemma}
Let $x\sim \D(W^*, b^*)$, where $b^*$ is non-negative. Suppose that $\widehat{b}\in\R^d$ is non-negative and satisfies $|\widehat{b}(i)-b^*(i)|\le \eps\norm{W^*(i,:)}_2$ for all $i\in[d]$ and some $\eps>0$. Then for all $i\neq j \in [d]$,
\begin{equation}
    \left\lvert \Pr_{x}[x(i)>\widehat{b}(i)\textnormal{ and }x(j)>\widehat{b}(j)] - \Pr_{x}[x(i)>b^*(i)\textnormal{ and }x(j)>b^*(j)] \right\rvert \le \eps.
    \label{eqn:angleEstErr}
\end{equation}
\label{lemma:angleEst}

\end{lemma}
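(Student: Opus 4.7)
The plan is to reduce both probabilities to events about the latent Gaussian $z$, and then bound the symmetric difference of these events using a simple density argument. First I would rewrite the indicators: since $x(i)=\relu(W^*(i,:)^Tz+b^*(i))$ and both $b^*(i)$ and $\widehat{b}(i)$ are non-negative, the events $\{x(i)>b^*(i)\}$ and $\{x(i)>\widehat{b}(i)\}$ coincide with the half-space events $\{W^*(i,:)^Tz>0\}$ and $\{W^*(i,:)^Tz>\widehat{b}(i)-b^*(i)\}$ respectively. Introduce $u_i := W^*(i,:)^Tz$, which is $\N(0,\norm{W^*(i,:)}_2^2)$, and $\Delta_i := \widehat{b}(i)-b^*(i)$, so that by hypothesis $|\Delta_i|\le \eps\norm{W^*(i,:)}_2$.

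With this notation, let $A=\{u_i>0,\ u_j>0\}$ and $B=\{u_i>\Delta_i,\ u_j>\Delta_j\}$. The quantity to bound becomes $|\Pr[A]-\Pr[B]|\le \Pr[A\triangle B]$. A short case analysis on the signs of $(\Delta_i,\Delta_j)$ (each of the four sign patterns is handled the same way) shows
\begin{equation*}
A\triangle B \;\subseteq\; \bigl\{u_i \in I_i\bigr\} \cup \bigl\{u_j \in I_j\bigr\},
\end{equation*}
where $I_i$ is the closed interval with endpoints $0$ and $\Delta_i$ (of length $|\Delta_i|$), and similarly for $I_j$. For instance, when $\Delta_i,\Delta_j\ge 0$ we have $B\subseteq A$ and $A\setminus B\subseteq \{0<u_i\le \Delta_i\}\cup\{0<u_j\le \Delta_j\}$; the other three sign patterns are analogous.

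Finally, since $u_i$ is a zero-mean Gaussian with standard deviation $\norm{W^*(i,:)}_2$, its density is bounded by $1/(\sqrt{2\pi}\,\norm{W^*(i,:)}_2)$, so
\begin{equation*}
\Pr[u_i\in I_i] \;\le\; \frac{|\Delta_i|}{\sqrt{2\pi}\,\norm{W^*(i,:)}_2} \;\le\; \frac{\eps}{\sqrt{2\pi}},
\end{equation*}
and likewise for $u_j$. Summing the two contributions gives $|\Pr[A]-\Pr[B]|\le 2\eps/\sqrt{2\pi}\le \eps$, where the last inequality uses $2/\sqrt{2\pi}<1$. The only nontrivial step is the case analysis producing the containment $A\triangle B\subseteq\{u_i\in I_i\}\cup\{u_j\in I_j\}$; the density bound is completely standard, and the reduction to events about $z$ is exactly where the non-negativity of both $b^*$ and $\widehat{b}$ is used (it lets the $\relu$ be stripped off without introducing a zero-atom discrepancy).
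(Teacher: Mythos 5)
Your proof is correct and takes essentially the same route as the paper's: reduce to events about the latent Gaussian $z$ (using non-negativity of $b^*$ and $\widehat{b}$ to strip the $\relu$), localize the discrepancy to the thin bands $\{u_i\in I_i\}$ and $\{u_j\in I_j\}$, and apply the $1/\sqrt{2\pi}$ density bound to conclude $2\eps/\sqrt{2\pi}\le\eps$. The only cosmetic difference is that you package the band containment via the symmetric difference $\Pr[A\triangle B]$, whereas the paper substitutes the lower and upper bounds on $\widehat{b}$ separately and bounds each one-sided discrepancy; the underlying estimate is identical.
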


Let $\ind(\cdot)$ be the indicator function, e.g., $\ind(x>0) = 1$ if $x>0$ and is 0 otherwise. Given samples $\{x_m\}_{m=1}^n$ of $\D(W^*, b^*)$ and an estimated bias vector $\widehat{b}$, Lemma~\ref{lemma:exactAngleEst} and~\ref{lemma:angleEst} implies that $\theta^*_{ij}$ can be estimated as
\begin{equation}
    \widehat{\theta}_{ij} = \pi - \frac{2\pi}{n}\sum_{m=1}^n \ind(x_{m}(i)>\widehat{b}(i)\textnormal{ and }x_{m}(j)>\widehat{b}(j)).
    \label{eqn:angleEst}
\end{equation}
The following lemma shows that the estimated $\widehat{\theta}_{ij}$ is close to the true $\theta^*_{ij}$.
\begin{lemma}
For a fixed pair of $i\neq j\in [d]$, for any $\eps, \delta\in(0,1)$, suppose $\widehat{b}$ satisfies the condition in Lemma~\ref{lemma:angleEst}, given $80\ln(2/\delta)/\eps^2$ samples, with probability at least $1-\delta$,
$|\cos(\widehat{\theta}_{ij}) - \cos(\theta^*_{ij})| \le \eps$.
\label{lemma:cosEst}
\end{lemma}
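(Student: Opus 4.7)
The plan is to split the error $|\cos(\widehat{\theta}_{ij}) - \cos(\theta^*_{ij})|$ into a deterministic piece coming from using $\widehat{b}$ in place of $b^*$ (handled by Lemma~\ref{lemma:angleEst}) and a random piece coming from replacing a probability by its empirical frequency (handled by Hoeffding's inequality), and then to convert the resulting angle error into a cosine error via the fact that $\cos$ is $1$-Lipschitz.

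Concretely, I would introduce three quantities: the population probability $p^* := \Pr_x[x(i)>b^*(i)\text{ and }x(j)>b^*(j)]$, which by Lemma~\ref{lemma:exactAngleEst} equals $(\pi-\theta^*_{ij})/(2\pi)$; the perturbed probability $\widehat{p} := \Pr_x[x(i)>\widehat{b}(i)\text{ and }x(j)>\widehat{b}(j)]$; and the empirical frequency $\widehat{p}_n := \frac{1}{n}\sum_{m=1}^n \ind(x_m(i)>\widehat{b}(i)\text{ and }x_m(j)>\widehat{b}(j))$ that appears inside~(\ref{eqn:angleEst}). By construction $\theta^*_{ij} = \pi - 2\pi p^*$ and $\widehat{\theta}_{ij} = \pi - 2\pi \widehat{p}_n$, so the $1$-Lipschitz property of $\cos$ and the triangle inequality yield
\[
|\cos(\widehat{\theta}_{ij}) - \cos(\theta^*_{ij})| \le |\widehat{\theta}_{ij}-\theta^*_{ij}| = 2\pi|\widehat{p}_n - p^*| \le 2\pi|\widehat{p}_n - \widehat{p}| + 2\pi|\widehat{p} - p^*|.
\]
Lemma~\ref{lemma:angleEst}, invoked with bias slack of order $\eps/(4\pi)$, bounds the second term by $\eps/2$. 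Conditionally on $\widehat{b}$, the $n$ indicators are i.i.d.\ Bernoulli$(\widehat{p})$, so Hoeffding's inequality gives $\Pr[|\widehat{p}_n - \widehat{p}|\ge \eps/(4\pi)]\le 2\exp(-n\eps^2/(8\pi^2))$, which is at most $\delta$ as soon as $n \ge 8\pi^2\ln(2/\delta)/\eps^2$; since $8\pi^2 < 80$, the stated sample size $n = 80\ln(2/\delta)/\eps^2$ suffices, and combining the two bounds gives the claim with probability at least $1-\delta$.

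The only real subtlety is the bookkeeping of constants: the $1$-Lipschitz conversion from probability to angle injects a factor of $2\pi$, so both the bias-approximation slack invoked in Lemma~\ref{lemma:angleEst} and the Hoeffding deviation must be of order $\eps/(4\pi)$, which is exactly what produces the constant $8\pi^2 \approx 79$ absorbed into the $80$. A minor additional care is that $\widehat{b}$ is itself computed from data, in which case I would condition on the high-probability event that $\widehat{b}$ satisfies the accuracy assumed in Lemma~\ref{lemma:angleEst} before applying Hoeffding to the angle-estimation samples. Beyond that, I do not expect any genuine mathematical obstacle; the argument is a routine triangle-inequality plus Bernoulli concentration.
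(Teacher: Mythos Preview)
Your proposal is correct and is essentially the same argument as the paper's: both decompose $|\widehat{p}_n-p^*|$ into a Hoeffding piece $|\widehat{p}_n-\widehat{p}|$ and a bias piece $|\widehat{p}-p^*|$ controlled by Lemma~\ref{lemma:angleEst}, then multiply by $2\pi$ and apply the $1$-Lipschitz property of $\cos$. The only cosmetic difference is that you budget the $\eps/(4\pi)$ slack explicitly at the outset, whereas the paper first derives a $4\pi\eps$ bound and then rescales $\eps$ at the end to recover the constant $8\pi^2<80$.
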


\subsection{Estimate $WW^T$ and $b$}
Our overall algorithm is given in Algorithm~\ref{algo:overall}. In the first for-loop, we estimate the row norms of $W^*$ and $b^*$. In the second for-loop, we estimate the angles between any two row vectors of $W^*$. 
\begin{theorem}
For any $\eps\in(0,1)$ and $\delta\in(0,1)$, Algorithm~\ref{algo:overall} takes $n=\widetilde{O}\left(\frac{1}{\eps^2}\ln(\frac{d}{\delta})\right)$ samples from $\D(W^*, b^*)$ (for some non-negative $b^*$) and outputs $\widehat{\Sigma}\in\R^{d\times d}$ and $\widehat{b}\in\R^d$ that satisfy
\begin{equation}
    \norm{\widehat{\Sigma} - W^*W^{*T}}_F \le \eps \norm{W^*}_F^2,\quad 
    \norm{\widehat{b} - b^*}_2 \le \eps \norm{W^*}_F
\end{equation}
with probability at least $1-\delta$. Algorithm~\ref{algo:overall} runs in time $\widetilde{O}\left(\frac{d^2}{\eps^2}\ln(\frac{d}{\delta})\right)$ and space $\widetilde{O}\left(\frac{d}{\eps^2}\ln(\frac{d}{\delta})+d^2\right)$.
\label{theorem:main}
\end{theorem}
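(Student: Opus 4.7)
The strategy is to combine the two sub-estimators and then aggregate entrywise errors into a Frobenius-norm bound. First I would invoke Lemma~\ref{lemma:normbias} with target accuracy $\eps_1=\eps/2$ and failure probability $\delta/2$, obtaining $(1-\eps_1)\norm{W^*(i,:)}_2^2\le\widehat{\Sigma}(i,i)\le(1+\eps_1)\norm{W^*(i,:)}_2^2$ and $|\widehat{b}(i)-b^*(i)|\le\eps_1\norm{W^*(i,:)}_2$ for every $i\in[d]$ simultaneously, at a cost of $\widetilde{O}(\ln(d/\delta)/\eps^2)$ samples. The clip $\widehat{b}(i)\leftarrow\max(0,\widehat{b}(i))$ only tightens these inequalities since $b^*(i)\ge 0$, so the clipped bias still satisfies the hypothesis of Lemma~\ref{lemma:angleEst}. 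Next, reusing the same sample pool, I would apply Lemma~\ref{lemma:cosEst} to every pair $i<j$ with accuracy $\eps_2=\eps/2$ and failure probability $\delta/(2\binom{d}{2})$; a union bound over the $\binom{d}{2}$ pairs yields $|\cos(\widehat{\theta}_{ij})-\cos(\theta^*_{ij})|\le\eps_2$ uniformly with probability at least $1-\delta/2$, again at cost $\widetilde{O}(\ln(d/\delta)/\eps^2)$.

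The next step is an entrywise error bound on $\widehat{\Sigma}$. The diagonal entries are handled directly by Lemma~\ref{lemma:normbias}. For off-diagonal entries, the multiplicative $(1\pm\eps_1)$ control on the diagonal implies $|\sqrt{\widehat{\Sigma}(i,i)\widehat{\Sigma}(j,j)}-\norm{W^*(i,:)}_2\norm{W^*(j,:)}_2|\le\eps_1\norm{W^*(i,:)}_2\norm{W^*(j,:)}_2$, and combining this with the cosine estimate via the triangle inequality (using $|\cos|\le 1$) yields
\begin{equation*}
|\widehat{\Sigma}(i,j)-\inprod{W^*(i,:),W^*(j,:)}|\le(\eps_1+\eps_2)\norm{W^*(i,:)}_2\norm{W^*(j,:)}_2=\eps\norm{W^*(i,:)}_2\norm{W^*(j,:)}_2.
\end{equation*}
Squaring, summing over all $(i,j)$, and using the identity $\sum_{i,j}\norm{W^*(i,:)}_2^2\norm{W^*(j,:)}_2^2=(\sum_i\norm{W^*(i,:)}_2^2)^2=\norm{W^*}_F^4$ then gives $\norm{\widehat{\Sigma}-W^*W^{*T}}_F\le\eps\norm{W^*}_F^2$. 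The bias bound $\norm{\widehat{b}-b^*}_2\le\eps\norm{W^*}_F$ follows identically from $\sum_i\eps_1^2\norm{W^*(i,:)}_2^2=\eps_1^2\norm{W^*}_F^2$, after absorbing the factor of two into a global rescaling of $\eps$.

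For resources, both subroutines use $\widetilde{O}(\ln(d/\delta)/\eps^2)$ samples, matching the theorem. Runtime is dominated by the double loop over $\binom{d}{2}$ pairs, each scanning $n$ samples for a pair of indicators, giving $\widetilde{O}(d^2\ln(d/\delta)/\eps^2)$; the first loop contributes only $\widetilde{O}(d\ln(d/\delta)/\eps^2)$ from the SGD iterations of Algorithm~\ref{algo:normbias}. Space is $O(nd)$ for the sample matrix plus $O(d^2)$ for $\widehat{\Sigma}$. The main obstacle is not any single analytic step---each ingredient is mild---but the bookkeeping: carving up the failure budget $\delta$ between the two subroutines, rescaling $\eps$ so that the triangle-inequality slack matches the target accuracy, verifying that a single shared sample pool suffices for both estimators, and checking that the $\max(0,\cdot)$ clip preserves the non-negativity hypothesis of Lemma~\ref{lemma:angleEst}.
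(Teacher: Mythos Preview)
Your proposal is correct and mirrors the paper's own proof: invoke Lemma~\ref{lemma:normbias} for the diagonal and bias, Lemma~\ref{lemma:cosEst} with a union bound over pairs for the cosines, combine them into an entrywise bound of the form $|\widehat{\Sigma}(i,j)-\langle W^*(i,:),W^*(j,:)\rangle|\le c\,\eps\,\norm{W^*(i,:)}_2\norm{W^*(j,:)}_2$, square and sum to get the Frobenius bound, then rescale $\eps,\delta$. One small arithmetic slip: from $(1\pm\eps_1)$ control on each $\sqrt{\widehat{\Sigma}(i,i)}$ you only get $|\sqrt{\widehat{\Sigma}(i,i)\widehat{\Sigma}(j,j)}-\norm{W^*(i,:)}_2\norm{W^*(j,:)}_2|\le(2\eps_1+\eps_1^2)\norm{W^*(i,:)}_2\norm{W^*(j,:)}_2$, not $\eps_1$ times the product, so the entrywise constant is a few units larger than $\eps_1+\eps_2$ (the paper lands on $7\eps$); this is harmless and is absorbed exactly by the global rescaling you already anticipate.
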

\begin{proof}
By Lemma~\ref{lemma:normbias}, the first for-loop of Algorithm~\ref{algo:overall} needs $\widetilde{O}\left(\frac{1}{\eps^2}\ln(\frac{d}{\delta})\right)$ samples and outputs $\widehat{\Sigma}(i,i)$ and $\widehat{b}(i)$ that satisfy for all $i\in[d]$,
\begin{equation}
    (1-\eps)\norm{W^*(i,:)}_2^2\le \widehat{\Sigma}(i,i) 
    \le (1+\eps) \norm{W^*(i,:)}_2^2,\quad
    |\widehat{b}(i) - b^*(i)| \le \eps \norm{W^*(i,:)}_2
    \label{eqn:bhat}
\end{equation}
with probability at least $1-\delta$. Since $\eps\in(0,1)$, the above equation implies that 
\begin{equation}
    (1-\eps)\norm{W^*(i,:)}_2\le \sqrt{\widehat{\Sigma}(i,i)} 
    \le (1+\eps) \norm{W^*(i,:)}_2,\quad
    \norm{\widehat{b} - b^*}_2 \le \eps \norm{W}_F.
    \label{eqn:sqrtSigma}
\end{equation}
By Lemma~\ref{lemma:cosEst}, if $\widehat{b}$ satisfies (\ref{eqn:bhat}), then the second for-loop of Algorithm~\ref{algo:overall} needs $O(\frac{1}{\eps^2}\ln(\frac{d^2}{\delta}))$ samples and outputs $\widehat{\theta}_{ij}$ that satisfies
\begin{equation}
    |\cos(\widehat{\theta}_{ij}) - \cos(\theta^*_{ij})|\le \eps, \textnormal{ for all }i\neq j\in[d]
    \label{eqn:cosEst}
\end{equation}
with probability at least $1-\delta$.
Combining (\ref{eqn:sqrtSigma}) and (\ref{eqn:cosEst}) gives that for all $i,j\in[d]$,
\begin{equation}
    |\widehat{\Sigma}(i,j) - \inprod{W^*(i,:), W^*(j,:)}|\le 7\eps\norm{W^*(i,:)}_2\norm{W^*(j,:)}_2
    \label{eqn:dotprod}
\end{equation}
with probability at least $1-2\delta$. To see why (\ref{eqn:dotprod}) is true, suppose (with loss of generality) that $\cos(\theta_{ij})\ge 0$, then $\widehat{\Sigma}(i,j)$ can be upper bounded by
\begin{align}
    \widehat{\Sigma}(i,j) 
    & = \sqrt{\widehat{\Sigma}(i,i)\widehat{\Sigma}(j,j)}\cos(\widehat{\theta}_{ij})\nonumber\\
    & \le (1+\eps)^2 \norm{W^*(i,:)}_2\norm{W^*(j,:)}_2(\cos(\theta^*_{ij})+\eps) \nonumber\\
    & = (1+2\eps+\eps^2) \inprod{W^*(i,:), W^*(j,:)} + \eps(1+\eps)^2\norm{W^*(i,:)}_2\norm{W^*(j,:)}_2\nonumber\\
    &\le \inprod{W^*(i,:), W^*(j,:)} + 3\eps \inprod{W^*(i,:), W^*(j,:)} + 4\eps\norm{W^*(i,:)}_2\norm{W^*(j,:)}_2\nonumber\\
    &\le \inprod{W^*(i,:), W^*(j,:)} + 7\eps \norm{W^*(i,:)}_2\norm{W^*(j,:)}_2.
\end{align}
The lower bound can be derived in a similar way. Given (\ref{eqn:dotprod}), we can bound $\norm{\widehat{\Sigma} - W^*W^{*T}}_F$ as
\begin{align}
    \norm{\widehat{\Sigma} - W^*W^{*T}}^2_F  
    & = \sum_{i,j\in[d]}\left(\widehat{\Sigma}(i,j) - \inprod{W^*(i,:), W^*(j,:)}\right)^2 \nonumber\\
    & \le \sum_{i,j\in[d]} 49\eps^2 \norm{W^*(i,:)}^2_2\norm{W^*(j,:)}^2_2 \nonumber\\
    & \le 49\eps^2\norm{W}_F^2 \sum_{i\in[d]} \norm{W^*(i,:)}^2_2  = 49\eps^2 \norm{W^*}_F^4,
\end{align}
which holds with probability at least $1-2\delta$. Re-scaling $\eps$ and $\delta$ gives the desired bound in Theorem~\ref{theorem:main}. The final sample complexity is $\widetilde{O}\left(\frac{1}{\eps^2}\ln(\frac{d}{\delta})\right)$ + $O(\frac{1}{\eps^2}\ln(\frac{d^2}{\delta}))$ = $\widetilde{O}\left(\frac{1}{\eps^2}\ln(\frac{d}{\delta})\right)$.

We now analyze the time complexity. The first for-loop runs in time $O(dn)$, where $n$ is the number of input samples. Note that in Step 3 of Algorithm~\ref{algo:projSGD}, gradient estimation requires sampling from a truncated normal distribution. This can be done by sampling from a normal distribution until it falls into the truncation set. The probability of hitting a truncation set is lower bounded by a constant (Lemma 7 of~\cite{DGTZ18}). The second for-loop of Algorithm~\ref{algo:overall} runs in time $O(d^2n)$. The space complexity is determined by the space required to store $n$ samples and the matrix $\widehat{\Sigma}\in\R^{d\times d}$, which is $O(dn+d^2)$. 
\end{proof}

Theorem~\ref{theorem:main} characterizes the sample complexity to achieve a small parameter estimation error. We are also interested in the distance between the estimated distribution and the true distribution. Let $\textnormal{TV}(A,B)$ be the total variation (TV) distance between two distributions $A$ and $B$. Note that in order for the TV distance to be meaningful\footnote{The TV distance between two different degenerate distributions can be a constant. As an example, let $\N(0,\Sigma_1)$ and $\N(0, \Sigma_2)$ be two Gaussian distributions in $\R^d$. If both $\Sigma_1, \Sigma_2$ have rank smaller than $d$, then $\textnormal{TV}(\N(0,\Sigma_1),\N(0, \Sigma_2)) = 1$ as long as $\Sigma_1\neq \Sigma_2$.}, we restrict ourselves to the non-degenerate case, i.e., when $W$ is a full-rank square matrix. The following corollary characterizes the number of samples used by our algorithm in order to achieve a small TV distance. 
\begin{corollary}
Suppose that $W^*\in\R^{d\times d}$ is full-rank. Let $\kappa$ be the condition number of $W^*W^{*T}$. For any $\eps\in(0,1/2]$ and $\delta\in(0,1)$, Algorithm~\ref{algo:overall} takes $n = \widetilde{O}\left(\frac{\kappa^2d^2}{\eps^2}\ln(\frac{d}{\delta})\right)$ samples from $\D(W^*, b^*)$ (for some non-negative $b^*$) and outputs a distribution $\D(\widehat{\Sigma}^{1/2},\widehat{b})$ that satisfies
\begin{equation}
    \textnormal{TV}\left(\D(\widehat{\Sigma}^{1/2},\widehat{b}), \;\D(W^*, b^*)\right) \le \eps,
\end{equation}
with probability at least $1-\delta$. Algorithm~\ref{algo:overall} runs in time $ \widetilde{O}\left(\frac{\kappa^2d^4}{\eps^2}\ln(\frac{d}{\delta})\right)$ and space $\widetilde{O}\left(\frac{\kappa^2d^3}{\eps^2}\ln(\frac{d}{\delta})\right)$.
\label{cor:tv_dist}
\end{corollary}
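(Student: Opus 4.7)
The bridge from Theorem~\ref{theorem:main} to a total variation bound is the data processing inequality. The map $y\mapsto \relu(y)$ is deterministic and coordinate-wise, and both $\D(\widehat{\Sigma}^{1/2}, \widehat{b})$ and $\D(W^*, b^*)$ are push-forwards of the pre-ReLU Gaussians $\N(\widehat{b}, \widehat{\Sigma})$ and $\N(b^*, W^*W^{*T})$ (since $(\widehat{\Sigma}^{1/2})(\widehat{\Sigma}^{1/2})^T = \widehat{\Sigma}$ when $\widehat{\Sigma}^{1/2}$ is the symmetric principal square root). Hence
\[
\textnormal{TV}\bigl(\D(\widehat{\Sigma}^{1/2},\widehat{b}),\, \D(W^*, b^*)\bigr) \le \textnormal{TV}\bigl(\N(\widehat{b},\widehat{\Sigma}),\, \N(b^*, \Sigma^*)\bigr),
\]
where $\Sigma^* := W^*W^{*T}$. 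The plan is to invoke Theorem~\ref{theorem:main} with a target parameter accuracy $\eps'$ to be calibrated against $\eps$, $d$, and $\kappa$, and to argue that driving $\eps' = \Theta(\eps/(d\kappa))$ makes the right-hand side at most $\eps$.

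The next step is a triangle inequality that decouples the mean and covariance perturbations,
\[
\textnormal{TV}(\N(\widehat{b},\widehat{\Sigma}), \N(b^*,\Sigma^*)) \le \textnormal{TV}(\N(\widehat{b},\widehat{\Sigma}), \N(b^*,\widehat{\Sigma})) + \textnormal{TV}(\N(b^*,\widehat{\Sigma}), \N(b^*,\Sigma^*)),
\]
and Pinsker's inequality applied to the standard closed-form KL between Gaussians. For the mean piece, the KL equals $\tfrac12 \norm{\widehat{\Sigma}^{-1/2}(\widehat{b}-b^*)}_2^2$, so Pinsker yields TV $\le \tfrac12 \norm{\widehat{b}-b^*}_2 /\sqrt{\sigma_{\min}(\widehat{\Sigma})}$; combined with Theorem~\ref{theorem:main}'s $\norm{\widehat{b}-b^*}_2 \le \eps'\norm{W^*}_F$, the bound $\norm{W^*}_F^2 \le d\,\sigma_{\max}(\Sigma^*)$, and $\sigma_{\min}(\widehat{\Sigma}) \ge \sigma_{\min}(\Sigma^*)/2$ (by Weyl, once $\eps' d\kappa$ is small), this becomes $O(\eps'\sqrt{d\kappa})$. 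For the covariance piece, set $A := (\Sigma^*)^{-1/2}(\widehat{\Sigma}-\Sigma^*)(\Sigma^*)^{-1/2}$; the KL equals $\tfrac12 \sum_i(\lambda_i(A) - \ln(1+\lambda_i(A)))$, which a second-order Taylor expansion (valid for $\norm{A}_{op}$ below a constant) bounds by $O(\norm{A}_F^2) \le O(\norm{\widehat{\Sigma}-\Sigma^*}_F^2 / \sigma_{\min}(\Sigma^*)^2) \le O((\eps')^2 d^2 \kappa^2)$. Pinsker then gives $O(\eps' d \kappa)$, which dominates the mean contribution.

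Choosing $\eps' = \Theta(\eps/(d\kappa))$ makes the total TV at most $\eps$, and plugging into the $\widetilde{O}(\ln(d/\delta)/(\eps')^2)$ sample complexity of Theorem~\ref{theorem:main} gives the claimed $\widetilde{O}(\kappa^2 d^2 \ln(d/\delta)/\eps^2)$. The time and space bounds follow by reading off Theorem~\ref{theorem:main} with the rescaled $\eps'$, noting that the output $\widehat{\Sigma}^{1/2}$ is a $d\times d$ matrix that must also be stored and applied.

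The step I expect to be most delicate is the second-order control of the covariance-KL expansion: I must verify that $\eps' \lesssim 1/(d\kappa)$ really forces $\norm{A}_{op}$ below a small absolute constant, so that $\lambda_i(A) - \ln(1+\lambda_i(A)) \le \lambda_i(A)^2$ holds uniformly across the spectrum; this same regime is also what guarantees $\widehat{\Sigma}\succ 0$ (via Weyl), without which the output distribution and the square root are not even well-defined. Once this regime is fixed, tracking the remaining absolute constants through two Pinsker applications to land at exactly $\eps$ (rather than a loose multiple) is routine.
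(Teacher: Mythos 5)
Your proof is correct and follows essentially the same route as the paper: data processing to pass from $\D(\widehat{\Sigma}^{1/2},\widehat{b})$ to the underlying Gaussians, Pinsker's inequality, the closed-form Gaussian KL, a second-order Taylor control of the eigenvalue terms $\lambda - \ln(1+\lambda)$, and the calibration $\eps' = \Theta(\eps/(d\kappa))$ obtained from $\norm{\Sigma^{*-1/2}}_2\norm{W^*}_F \le \sqrt{\kappa d}$ and $\norm{\Sigma^{*-1/2}}_2^2\norm{W^*}_F^2 \le \kappa d$.

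The one structural difference is minor: you insert a triangle inequality to decouple the mean and covariance perturbations into two separate Pinsker applications, which forces you to invoke Weyl's inequality to lower-bound $\sigma_{\min}(\widehat{\Sigma})$ for the mean piece. The paper avoids both steps by computing a single $\textnormal{KL}(\N(\widehat{b},\widehat{\Sigma}) \,\|\, \N(b^*,\Sigma^*))$, whose closed form already separates additively into a covariance term $\sum_i(\lambda_i - 1 - \ln \lambda_i)$ and a mean term $\norm{\Sigma^{*-1/2}(\widehat{b}-b^*)}_2^2$ — both referenced to $\Sigma^*$ rather than $\widehat{\Sigma}$, so no spectral control of $\widehat{\Sigma}$ is needed beyond the constraint $\eps \le 1/2$ already keeping $\lambda_i \in [1/2, 3/2]$. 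The single-KL route is a touch cleaner; your version is equally valid and has the small pedagogical advantage of making the positive-definiteness of $\widehat{\Sigma}$ (which the algorithm does not enforce by construction) an explicit consequence of the Weyl perturbation bound rather than an implicit byproduct.
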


\section{Lower Bounds}
In the previous section, we gave an algorithm to estimate $W^*W^{*T}$ and $b^*$ using i.i.d. samples from $\D(W^*, b^*)$, and analyzed its sample complexity. In this section, we provide lower bounds for this density estimation problem. More precisely, we want to know: how many samples are necessary if we want to learn $\D(W^*, b^*)$ up to some error measure $\epsilon$?

Before stating our lower bounds, we first formally define a framework for distribution learning\footnote{This can be viewed as the standard PAC-learning framework~\cite{Val84}.}. Let $S$ be a class of distributions. Let $d$ be some distance function between the two distributions (or between the parameters of the two distributions). We say that a distribution learning algorithm learns $S$ with sample complexity $m(\eps)$ if for any distribution $p\in S$, given $m(\eps)$ i.i.d. samples from $p$, it constructs a distribution $q$ such that $d(p, q)\le \eps$ with success probability at least 2/3\footnote{We focus on constant success probability here as standard techniques can be used to boost the success probability to $1-\delta$ with an extra multiplicative factor $\ln(1/\delta)$ in the sample complexity.}.

We have analyzed the performance of Algorithm~\ref{algo:overall} in terms of two distance metrics: the distance in the parameter space (Theorem~\ref{theorem:main}), and the TV distance between two distributions (Corollary~\ref{cor:tv_dist}). Accordingly, we will provide two sample complexity lower bounds.

\begin{theorem} (Lower bound for parameter estimation).
Let $\sigma>0$ be a fixed and known scalar. Let $I_d$ be the identity matrix in $\R^d$. Let $S:=\{\D(W, b): W = \sigma I_d, b\in \R^d \textnormal{ non-negative}\}$ be a class of distributions in $\R^d$. Any algorithm that learns $S$ to satisfy $\norm{\widehat{b}-b^*}_2\le \eps \norm{W^*}_F$ with success probability at least 2/3 requires $\Omega(1/\eps^2)$ samples. 
\label{thm:lower-bound-param}
\end{theorem}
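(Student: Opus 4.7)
The plan is to use a multi-hypothesis information-theoretic lower bound, applied to a Varshamov--Gilbert packing in the hypercube. The key structural observation is that when $W^* = \sigma I_d$, samples from $\D(W^*, b^*)$ factor into $d$ independent one-dimensional rectified Gaussians $\D(\sigma, b^*(i))$, so KL divergences add across coordinates, which is what makes the $d$-dependence eventually cancel.

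Without loss of generality, I would rescale so that $\sigma = 1$, so the target accuracy becomes $\|\widehat{b} - b^*\|_2 \le \eps\sqrt{d}$. First, I would construct the hard family: pick a shift $c = 1$ and perturbation $\Delta = 4\eps$, and invoke Varshamov--Gilbert to obtain $v_1, \dots, v_N \in \{0,1\}^d$ with $N = 2^{\Omega(d)}$ and pairwise Hamming distance at least $d/4$. Setting $b_j := c\,\mathbf{1} + \Delta\, v_j \in \R^d_{\ge 0}$ yields hypotheses inside $S$ whose pairwise $\ell_2$ distances exceed $2\eps\sqrt{d}$, so any estimator that meets the accuracy requirement with probability $\ge 2/3$ identifies the correct hypothesis (by rounding $\widehat{b}$ to the nearest $b_j$) with probability $\ge 2/3$.

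Second, I would bound the per-sample KL between any two hypotheses. Applying the data-processing inequality to the ReLU map, for any non-negative scalars $b, b'$,
\[ \kldist{\D(1,b)}{\D(1,b')} \le \kldist{\N(b,1)}{\N(b',1)} = (b-b')^2/2. \]
Summing over the independent coordinates, any two hypotheses satisfy $\kldist{\D(I_d, b_j)}{\D(I_d, b_k)} \le d\Delta^2/2 = 8d\eps^2$. Fano's inequality applied to this family then forces $n \cdot 8d\eps^2 \gtrsim \log N = \Omega(d)$, giving the advertised $n = \Omega(1/\eps^2)$. For $d$ below the Varshamov--Gilbert threshold (including $d=1$), the same strategy reduces to a direct Le Cam two-point argument between $b=0$ and $b = 2\eps$ using the identical KL bound.

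The main obstacle is tuning the perturbation scale $\Delta$: it must be large enough that hypotheses are separated by more than twice the error budget $\eps\sqrt{d}$ (so rounding recovers the correct hypothesis), yet small enough that the per-coordinate KL stays $O(\eps^2)$, so that Fano produces a lower bound matching the $\widetilde{O}(1/\eps^2)$ upper bound of Theorem~\ref{theorem:main}. The choice $\Delta = \Theta(\eps)$ balances these two, and the independent-coordinate factorization is exactly what makes the $d$ in the packing size $\log N \asymp d$ cancel with the $d$ in the total KL $\asymp d\eps^2$.
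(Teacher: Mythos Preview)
Your proposal is correct and follows essentially the same approach as the paper: a Gilbert--Varshamov packing of the hypercube, bias vectors $b_v$ at scale $\Theta(\eps)$ along the packing, the data-processing inequality to bound $\kldist{\D(\sigma I_d,b_v)}{\D(\sigma I_d,b_{v'})}$ by the Gaussian KL $\|b_v-b_{v'}\|_2^2/(2\sigma^2)=O(d\eps^2)$, and Fano's inequality to conclude $n=\Omega(1/\eps^2)$. The only cosmetic differences are that the paper takes the shift $c=0$ (so $b_v=6\eps\sigma v$) rather than your $c=1$, and computes the KL directly for the multivariate Gaussian rather than summing coordinate-wise; your explicit Le Cam fallback for small $d$ is a detail the paper omits.
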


\begin{theorem} (Lower bound for distribution estimation).
Let $S:=\{\D(W,0): W\in\R^{d\times d} \textnormal{ full rank}\}$ be a set of distributions in $\R^d$. Any algorithm that learns $S$ within total variation distance $\eps$ and success probability at least 2/3 requires $\Omega(d/\eps^2)$ samples. 
\label{thm:lower-bound-dist}
\end{theorem}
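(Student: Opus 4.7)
The plan is to restrict to a combinatorial sub-family of $S$ indexed by $v \in \{-1,+1\}^d$ and run the standard Assouad / coordinate-decomposition argument. Fix $\alpha := c_1 \eps/\sqrt{d}$ for a small constant $c_1 > 0$ to be chosen later, and for each $v \in \{-1,+1\}^d$ let $W_v$ be the diagonal matrix with entries $1 + \alpha v_i$ and $P_v := \D(W_v, 0)$. Each $W_v$ is full-rank, so $P_v \in S$; and because $W_v$ is diagonal and the coordinates of $z \sim \N(0, I_d)$ are independent, $P_v$ factors as $\bigotimes_{i=1}^d Q_{v_i}$, where $Q_\pm$ is the univariate law of $\relu((1 \pm \alpha) z)$ for $z \sim \N(0,1)$---a point mass of weight $1/2$ at $0$ together with a half-Gaussian tail on $(0,\infty)$ of total mass $1/2$. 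Write $h(v,u) := |\{i : v_i \ne u_i\}|$ for Hamming distance.

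I would run the argument in two directions. \emph{(i) Lower bound for coordinate recovery.} For $V \sim \textnormal{Unif}(\{\pm 1\}^d)$ the samples split across coordinates: $\{x_m(i)\}_{m=1}^n$ are iid $Q_{V_i}$ and independent of $\{V_j, x_m(j)\}_{j \ne i}$, so the sufficient statistic for $V_i$ is $\{x_m(i)\}_m$. A direct calculation (the atoms at $0$ cancel and only half of a Gaussian KL remains) gives $\kldist{Q_+}{Q_-} = O(\alpha^2)$, and by Pinsker $\textnormal{TV}(Q_+^n, Q_-^n) \le C\sqrt{n}\,\alpha$ for a universal $C$. Le~Cam's two-point bound then yields $\Pr[\hat V_i \ne V_i] \ge (1 - C\sqrt{n}\,\alpha)/2$ for every estimator $\hat V$ and every $i$; summing in $i$, $\E[h(V,\hat V)] \ge d(1 - C\sqrt{n}\,\alpha)/2$, which is $\ge 0.475\,d$ whenever $n \le c_2/\alpha^2 = c_2 d/(c_1^2\eps^2)$ for a sufficiently small $c_2$. \emph{(ii) Distribution learning implies coordinate recovery.} Given $\hat P$ from the supposed algorithm with $\textnormal{TV}(\hat P, P_V) \le \eps$ (success event, probability $\ge 2/3$), set $\hat V := \arg\min_{u \in \{\pm 1\}^d} \textnormal{TV}(\hat P, P_u)$; the triangle inequality gives $\textnormal{TV}(P_V, P_{\hat V}) \le 2\eps$. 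Assuming the product-structure TV lower bound
\begin{equation*}
\textnormal{TV}(P_v, P_u) \;\ge\; c_3\,\alpha\,\sqrt{h(v,u)} \qquad \text{for all } v,u \in \{\pm 1\}^d,
\end{equation*}
this forces $h(V,\hat V) \le 4d/(c_1 c_3)^2$ on the success event. Choosing $c_1$ so that $4d/(c_1 c_3)^2 \le d/10$ gives $\E[h(V,\hat V)] \le \tfrac{2}{3}\cdot\tfrac{d}{10} + \tfrac{1}{3}\cdot d = 0.4\,d < 0.475\,d$, contradicting~(i) unless $n = \Omega(d/\eps^2)$.

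The hardest step is the displayed product-structure TV lower bound. The Hellinger route gives only $\textnormal{TV}(P_v, P_u) \ge H^2(P_v, P_u) \asymp \alpha^2\, h(v, u)$, which scales as $h$ rather than the needed $\sqrt{h}$ and is too weak by a factor of $\sqrt{h}$. To obtain the $\sqrt{h}$ rate I would analyse the log-likelihood ratio directly: under $P_v$, $\log(dP_v/dP_u)$ is a sum of $h(v,u)$ independent bounded summands of mean $\Theta(\alpha^2)$ and variance $\Theta(\alpha^2)$, so a Berry--Esseen approximation shows its law is close to $\N\bigl(\kldist{P_v}{P_u},\; 2\,\kldist{P_v}{P_u}\bigr)$ with $\kldist{P_v}{P_u} = \Theta(h\alpha^2)$; combined with the identity $\textnormal{TV}(P,Q) = \Pr_P[\log(dP/dQ) \ge 0] - \Pr_Q[\log(dP/dQ) \ge 0]$, this yields $\textnormal{TV}(P_v, P_u) \gtrsim \sqrt{\kldist{P_v}{P_u}} \asymp \alpha\sqrt{h(v,u)}$ in the regime $\alpha\sqrt{h(v,u)} \lesssim 1$ relevant here. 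The rectification itself is harmless: on the atom at $0$ the likelihood ratio equals $1$ and drops out of the LLR sum, so the argument reduces to the purely Gaussian case up to an overall factor of $1/2$.
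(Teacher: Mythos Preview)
Your overall strategy---indexing a hypercube sub-family by diagonal weight matrices and reducing distribution learning to coordinate recovery via a Le~Cam/Assouad argument---is a legitimate alternative to the paper's route, which instead takes a Gilbert--Varshamov packing in $\{0,1\}^d$ and applies Fano's inequality. Both approaches pivot on the same key inequality: for two product rectified Gaussians differing in $h$ coordinates one needs $\textnormal{TV}(P_v,P_u)\gtrsim \alpha\sqrt{h}$. You correctly flag this as the hard step, and you are right that the Hellinger route only gives the weaker $h\alpha^2$.

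The gap is in your Berry--Esseen route to that inequality. First, a small slip: the summands $\ell_i$ are \emph{not} bounded---on $\{x_i>0\}$ the log-likelihood ratio equals $a+bx_i^2$ with $a,b=\Theta(\alpha)$---though they do have third absolute moment $\Theta(\alpha^3)$, which is what Berry--Esseen actually needs. The real problem is quantitative: the Berry--Esseen error on the CDF of the standardized LLR is $\Theta\bigl(\rho_0/(\sigma_0^{3}\sqrt{h})\bigr)=\Theta(1/\sqrt{h})$, hence $\Theta(1/\sqrt{d})$ in the relevant range $h=\Theta(d)$. But the target TV value is $\Theta(\alpha\sqrt{h})=\Theta(\eps)$, so the normal approximation is informative only when $\eps\gg 1/\sqrt{d}$, i.e.\ $d\gg 1/\eps^2$; for smaller $\eps$ the remainder swamps the signal and you cannot extract $\textnormal{TV}\gtrsim \alpha\sqrt{h}$. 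The paper avoids this entirely: it partitions the support by the zero/positive pattern $A_\Omega=\{x_\Omega>0,\ x_{\Omega^c}=0\}$, uses the zero-mean diagonal symmetry to identify the contribution on each cell with $2^{-d}$ times a full Gaussian total-variation distance, and then invokes the non-asymptotic bound $\textnormal{TV}(\N(0,\Sigma_1),\N(0,\Sigma_2))\ge c\,\|\Sigma_1^{-1}\Sigma_2-I_d\|_F$ of Devroye--Mehrabian--Reddad. Summing over all $2^d$ cells recovers $\textnormal{TV}(P_v,P_u)\gtrsim \alpha\sqrt{h}$ with no CLT error term, valid for every $\eps$. Substituting that argument (or any other non-asymptotic Gaussian TV lower bound) for your Berry--Esseen step would make your Assouad framework go through in full generality.
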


Comparing the sample complexity achieved by our algorithm (Theorem~\ref{theorem:main} and Corollary~\ref{cor:tv_dist}) and the above lower bounds, we can see that 1) our algorithm matches the lower bound (up to log factors) for parameter estimation; 2) there is a gap between our sample complexity and the lower bound for TV distance. There are two possible reasons why this gap shows up.
\begin{itemize}
    \item The lower bound given in Theorem~\ref{thm:lower-bound-dist} may be loose. In fact, since learning a $d$-dimensional Gaussian distribution up to TV distance $\eps$ requires $\widetilde{\Theta}(d^2/\eps^2)$ samples (this is both sufficient and necessary~\cite{ABHLMP18}), it is reasonable to guess that learning rectified Gaussian distributions also requires at least $\Omega(d^2/\eps^2)$ samples. It is thus interesting to see if one can show a better lower bound than $\Omega(d/\eps^2)$. 
    \item Our sample complexity of learning $\D(W, b)$ up to TV distance $\eps$ also depends on the condition number $\kappa$ of $WW^T$. Intuitively, this $\kappa$ dependence shows up because our algorithm estimates $WW^T$ entry-by-entry instead of estimating the matrix as a whole. Besides, our algorithm is a \emph{proper} learning algorithm, meaning that the output distribution belongs to the family $\D(W, b)$. By contrast, the lower bound proved in Theorem~\ref{thm:lower-bound-dist} considers any \emph{non-proper} learning algorithm, i.e., there is no constraint on the output distribution. One interesting direction for future research is to see if one can remove this $\kappa$ dependence.
\end{itemize}
\section{Experiments}
In this section, we provide empirical results to verify the correctness of our algorithm as well as the analysis. Code to reproduce our result\footnote{The hyper-parameters are $B=1$ (in Algorithm~\ref{algo:normbias}), $r=3$ and $\lambda=0.1$ (in Algorithm~\ref{algo:projSGD}).} can be found at \url{https://github.com/wushanshan/densityEstimation}. 

We evaluate three performance metrics, as shown in Figure~\ref{fig:err_vs_params}. The first two metrics measure the error between the estimated parameters and the ground truth. Specifically, we compute the estimation errors analyzed in Theorem~\ref{theorem:main}: $\norm{\widehat{\Sigma} - W^*W^{*T}}_F /\norm{W}_F^2$ and $\norm{\widehat{b} - b}_2/\norm{W}_F$. Besides the parameter estimation error, we are also interested in the TV distance analyzed in Corollary~\ref{cor:tv_dist}: $\textnormal{TV}\left(\D(\widehat{\Sigma}^{1/2},\widehat{b}), \;\D(W^*, b^*)\right)$. It is difficult to compute the TV distance exactly, so we instead compute an upper bound of it. Let $KL(A||B)$ denote the KL divergence between two distributions. Let $\Sigma^* = W^*W^{*T}$. Assuming that both $\Sigma^*$ and $\widehat{\Sigma}$ are full-rank, we have 
\begin{displaymath}
    \textnormal{TV}\left(\D(\widehat{\Sigma}^{1/2},\widehat{b}), \;\D(W^*, b^*)\right)\le \textnormal{TV}\left(\N(\widehat{b}, \widehat{\Sigma}), \N(b^*, \Sigma^*)\right)\le \sqrt{\textnormal{KL}\left(\N(\widehat{b}, \widehat{\Sigma})||\N(b^*, \Sigma^*)\right)/2}.
\end{displaymath}
The first inequality follows from the data-processing inequality given in Lemma \ref{dpi-ineq} of Appendix~\ref{app:lower-bound-param} (see also~\cite[Fact A.5]{ABHLMP18}): for any function $f$ and random variables $X,Y$ over the same space, $\textnormal{TV}(f(X), f(Y))\le \textnormal{TV}(X,Y)$. The second inequality follows from the Pinsker's inequality~\cite[Lemma 2.5]{Tsy09}.

\begin{figure}[t]
    \centering
	\includegraphics[width=1.0\textwidth]{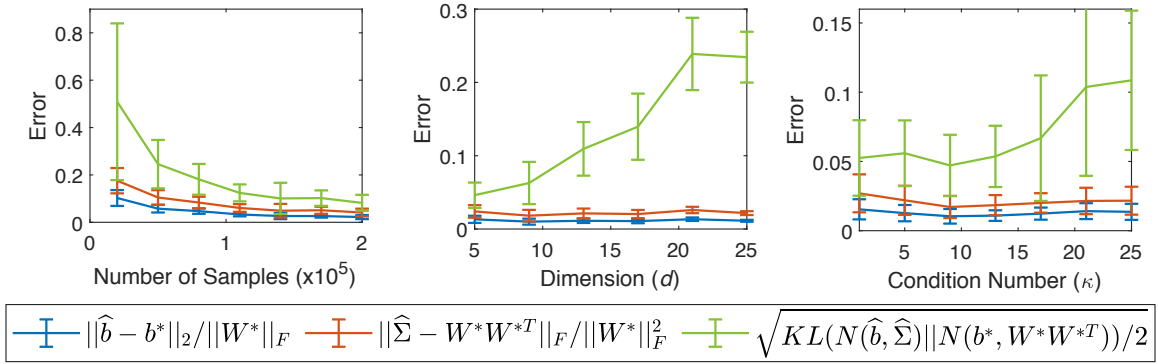}

	\caption{Best viewed in color. Empirical performance of our algorithm with respect to three parameters: number of samples $n$, dimension $d$, and the condition number $\kappa$. {\bf Left:} Fix $d=5$ and $\kappa=1$. {\bf Middle:} Fix $n=5\times 10^5$ and $\kappa=1$. {\bf Right:} Fix $n=5\times 10^5$ and $d=5$. Every point shows the mean and standard deviation across 10 runs. Each run corresponds to a different $W^*$ and $b^*$.}
	\label{fig:err_vs_params}

\end{figure}

{\bf Sample Efficiency.} The left plot of Figure~\ref{fig:err_vs_params} shows that both the parameter estimation errors and the KL divergence decrease when we have more samples. Our experimental setting is simple: we set the dimension as $d=k=5$ and the condition number as 1; we generate $W^*$ as a random orthonormal matrix; we generate $b^*$ as a random normal vector, followed by a ReLU operation (to ensure non-negativity). This plot indicates that our algorithm is able to accurately estimate the true parameters and obtain a distribution that is close to the true distribution in TV distance. 

{\bf Dependence on Dimension.} In the middle plot of Figure~\ref{fig:err_vs_params}, we use $5\times 10^5$ samples and keep the condition number to be 1. We then increase the dimension ($d=k$) from 5 to 25. Both $W^*$ and $b^*$ are generated in the same manner as the previous plot. As shown in the middle plot, the parameter estimation errors maintain the same value while the KL divergence increases as the dimension increases. This is consistent with our analysis, because the sample complexity in  Theorem~\ref{theorem:main} is dimension-free (ignoring the log factor) while the sample complexity in Corollary~\ref{cor:tv_dist} depends on $d^2$.

{\bf Dependence on Condition Number.} In the right plot of Figure~\ref{fig:err_vs_params}, we keep the dimension $d=k=5$ and the number of samples $5\times 10^5$ fixed. We then increase the condition number $\kappa$ of $W^*W^{*T}$. This plot shows the same trend as the middle plot, i.e., the parameter estimation errors remain the same while the KL divergence increases as $\kappa$ increases, which is again consistent with our analysis. The number of samples required to achieve an additive estimation error (Theorem~\ref{theorem:main}) does not depend on $\kappa$, while the sample complexity to guarantee a small TV distance (Corollary~\ref{cor:tv_dist}) depends on $\kappa^2$.
\section{Open Problems}

\subsection{Negative Bias}\label{sec:negative_b}
Our algorithm relies on the assumption that the bias vector is non-negative. This assumption is required to ensure that Lemma~\ref{lemma:exactAngleEst} holds, which subsequently ensures that the pairwise angles between the row vectors of $W^*$ can be correctly recovered. A weaker assumption would be allowing the bias vector $b^*$ to be negative but constraining the largest negative values. Designing algorithms under this weaker assumption is an interesting direction for future research. 

When $b^*$ has negative components, running our algorithm can still recover part of the parameters with a small number of samples. Specifically, let $\Omega:=\{i\in[d]: b^*(i)\ge 0\}$ be the set of coordinates that $b^*$ is non-negative; let $b^*_{\Omega}\in\R^{|\Omega|}$ and $W^*_{\Omega}\in\R^{|\Omega|\times k}$ be the sub-vector and sub-matrix associated with the coordinates in $\Omega$. Then given $O(\frac{1}{\eps^2}\ln(\frac{d}{\delta})$ samples, the output of our algorithm $\widehat{b}\in\R^{d}$ and $\widehat{\Sigma}\in\R^{d\times d}$ satisfies
\begin{equation*}
\norm{\widehat{\Sigma}_{\Omega\times\Omega} - W^*_{\Omega}W^{*T}_{\Omega}}_F \le \eps \norm{W^*_{\Omega}}_F^2, \quad \norm{\widehat{b}_{\Omega}-b_{\Omega}^*}_2 \le \eps \norm{W_{\Omega}^*}_F,
\end{equation*}
with probability at least $1-\delta$. The above guarantee is the same as Theorem~\ref{theorem:main}. The reason is that our algorithm only uses the $i$-th and $j$-th coordinates of the samples to estimate $\inprod{W^*(i,:), W^*(j,:)}$ and $b^*(i)$, $b^*(j)$. As a result, Theorem~\ref{theorem:main} still holds for this part of the parameters. 

For the rest part of the parameters, if the negative components of $b^*$ are small (in absolute value), then the error of our algorithm will be also small. Let $\Omega^c$ be the complement of $\Omega$. We assume that there is a value $\eta\ge 0$ such that the negative coordinates of $b^*$ satisfy 
\begin{equation*}
    b^*(i)\ge -\eta \norm{W^*(i,:)}_2, \quad \textnormal{ for all } i\in\Omega^c.
\end{equation*} 
Given $\widetilde{O}(\ln(d)/\eps^2)$ samples, the output of our algorithm satisfies
\begin{equation*}
|\widehat{b}(i) - b^*(i)|\le \max(\eta, \eps)\norm{W^*(i,:)}_2,\quad \textnormal{ for all } i\in\Omega^c.
\end{equation*}
One can show a similar result for $\inprod{W^*(i,:), W^*(j,:)}$, where $i\in\Omega^c$ and $j\in[d]$:
\begin{equation*}
|\widehat{\Sigma}(i,j) - \inprod{W^*(i,:), W^*(j,:)}|\le 7\max(\eta, \eps)\norm{W^*(i,:)}_2\norm{W^*(j,:)}_2. 
\end{equation*}
Comparing the above two equations with (\ref{eqn:bhat}) and (\ref{eqn:dotprod}), we see that the error from the negative bias is small if $\eta=O(\eps)$. If $\eta$ is large, i.e., if $b^*$ have large negative components, then estimating those parameters becomes difficult (as indicated by Claim~\ref{claim:negative-b}). In that case, maybe one should directly estimate the distribution without estimating the parameters. This is an interesting direction for future research.

\subsection{Two-Layer Generative Model}
One natural generalization of our problem is to consider distributions defined by a two-layer generative model:
\begin{definition}
Given $A\in \R^{d\times p}$, $W\in\R^{p\times k}$, and $b\in\R^{p}$, we define $\D(A,W,b)$ as the distribution of a random variable $x\in\R^d$ generated as follows:
\begin{equation}
    x = A\;\relu(Wz+b), \textnormal{ where } z\sim \N(0, I_k).
\end{equation}
\end{definition}
Given i.i.d. samples $x\sim\D(A,W,b)$, can we recover the parameters $A,W,b$ (up to permutation and scaling of the columns of $A$)? While this problem seems hard in general, we find an interesting connection between this problem and non-negative matrix factorization. A non-negative matrix has all its entries being non-negative. Note that in our problem, the $A$ matrix does \emph{not} need to be a non-negative matrix.

{\bf Connection to Non-negative Matrix Factorization (NMF).} In MNF, we are given a non-negative matrix $X\in\R^{d\times n}$ and an integer $p>0$, the goal is to find two non-negative matrices $A\in\R^{d\times p}, M\in\R^{p\times n}$ such that $X=AM$. This problem is NP-hard in general~\cite{Vav09}. Arora et al.~\cite{AGKM12} give the first polynomial-time algorithm under the ``separability'' condition~\cite{DS04}: 
\begin{definition}
The factorization $X=AM$ is called separable\footnote{Here we define separability with respect to the $M$ matrix while \cite[Definition 5.1]{AGKM12} defines it with respect to the $A$ matrix, but they are equivalent definitions.} if for each $i\in[p]$, there is a column $f(i)\in[n]$ of $M$ such that $M(:, f(i))\in\R^p$ has only one non-zero positive entry at the $i$-th location, i.e., $M(i, f(i))>0$ and $M(j, f(i))=0$ for $j\neq i$.
\label{def:separability}
\end{definition}
If the separability condition holds, then the algorithm proposed in~\cite{AGKM12} is guaranteed to find a separable non-negative factorization in time polynomial in $n,p,d$.

In our problem, we are given $n$ samples $\{x_i\}_{i=1}^n$ from $\D(A, W, b)$. Stacking these samples to form a matrix $X \in \R^{d\times n}$ as
\begin{equation}
    X = AM, \textnormal{ where }M(:,i) = \relu(Wz_i+b), i\in [n].
    \label{eqn:x=am}
\end{equation}
Note that $M\in\R^{p\times n}$ is a non-negative matrix while $A$ can be an arbitrary matrix. Nevertheless, if $M$ satisfies the separability condition (Definition~\ref{def:separability}), and $A$ has full column rank (i.e., the columns of $A$ are linearly independent), then we can still use the same idea of~\cite{AGKM12} to \emph{exactly} recover $A$ and $M$ (up to permutation and scaling of the column vectors in $A$). Once $M\in\R^{p\times n}$ is recovered, estimating $W$ and $b$ is the same problem as learning one-layer ReLU generative model, and hence can be done by our algorithm (Algorithm~\ref{algo:overall}) assuming that $b$ is non-negative. 

The pseudocode is given in Algorithm~\ref{algo:two-layer}. We first create a set $S$ by normalizing each sample and removing zero and duplicated vectors. The next step is to check for each vector $v\in S$, whether $v$ can be represented as a conical sum (i.e., non-negative linear combination) of the rest vectors in $S$. This can be done by checking the feasibility of a linear program. For example, checking whether vector $v$ can be expressed as a conical sum of two vectors $w_1, w_2$ is equivalent to checking whether the following linear program is feasible:
\begin{equation*}
    \min_{c_1\ge 0,\; c_2\ge 0} c_1+c_2\quad \textnormal{s.t. } c_1w_1+c_2w_2 = v.
\end{equation*}
We only keep a vector if it \emph{cannot} be written as the conical sum of the other vectors. Those vectors are then stacked to form $\widehat{A}$. Let $\widehat{A}^{\dagger}=(\widehat{A}^T\widehat{A})^{-1}\widehat{A}^T$ be the pseudo-inverse of $\widehat{A}$. The last step is to compute $\{\widehat{A}^{\dagger}x_i\}_{i=1}^n$ and treat them as samples from one-layer ReLU generative model so that we can run Algorithm~\ref{algo:overall} to estimate $W^*W^{*T}$ and $b^*$.

\begin{algorithm2e}
\caption{Learning a two-layer ReLU generative model}
\label{algo:two-layer}
\LinesNumbered
\KwIn{$n$ i.i.d. samples $x_1, \cdots, x_n\in\R^d$ from $\D(A^*, W^*, b^*)$, $b^*$ is non-negative, $A^*$ has linearly independent column vectors.}
\KwOut{$\widehat{A}\in\R^{d\times p}$, $\widehat{\Sigma}\in \R^{p\times p}$, $\widehat{b}\in\R^p$.}
$S\leftarrow \emptyset$\;
\For{$i\leftarrow 1$ \KwTo $n$}{
 \If{$x_i\neq 0$}{
    $S\leftarrow S\cup \{x_i/\norm{x_i}_2\}$\;
 }
}
Remove duplicated vectors from $S$\;
\For{$v \in S$}{
 \If{$v$ \textnormal{is a conical sum of the rest vectors in} $S$}{
  Remove $v$ from $S$\;
 }
}
$\widehat{A}\leftarrow $ stack vectors from $S$\;
$\widehat{\Sigma}, \widehat{b} \leftarrow$ run Algorithm~\ref{algo:overall} with samples $\{\widehat{A}^{\dagger}x_i\}_{i=1}^n$.
\end{algorithm2e}

\begin{claim}
Define $X\in \R^{d\times n}$ and $M\in\R^{p\times n}$ as in (\ref{eqn:x=am}). Without loss of generality, we assume that the column vectors of $A^*$ have unit $\ell_2$-norm. Let $\widehat{A}$  be the output of Algorithm~\ref{algo:two-layer}. If $A^*$ has full column rank, and $M$ satisfies the separability condition in Definition~\ref{def:separability}, then there is a way to permute the column vectors of $\widehat{A}$ so that $\widehat{A}=A^*$.
\label{claim:recover-A}
\end{claim}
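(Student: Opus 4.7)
\textbf{Proof proposal for Claim~\ref{claim:recover-A}.} The plan is to verify three facts about the set $S$ maintained by Algorithm~\ref{algo:two-layer}: (i) every column of $A^*$ (after normalization, which is trivial since columns are unit norm) appears in $S$ after the normalize--and--deduplicate step; (ii) no column of $A^*$ can be written as a conical sum of other vectors in $S$; (iii) every vector in $S$ that is not a column of $A^*$ can be written as a conical sum of the columns of $A^*$. Together these imply that, regardless of the order in which Algorithm~\ref{algo:two-layer} processes vectors in its removal loop, the final $S$ is exactly $\{A^*(:,1),\dots,A^*(:,p)\}$ up to permutation.

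First I would verify (i). By the separability assumption (Definition~\ref{def:separability}), for each $i\in[p]$ there exists a column index $f(i)\in[n]$ such that $M(:,f(i)) = \alpha_i e_i$ for some $\alpha_i>0$. Then $x_{f(i)} = A^* M(:,f(i)) = \alpha_i A^*(:,i)$, and since $\|A^*(:,i)\|_2=1$, the normalization step yields exactly $A^*(:,i)\in S$. Next, every vector $v \in S$ has the form $v = A^*u/\|A^*u\|_2$ for some non-negative $u\in\R^p$ (namely $u$ proportional to some column of $M$), because $M$ is non-negative by construction.

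For (ii), suppose for contradiction that some column, say $A^*(:,1)$, can be written as a conical sum $A^*(:,1) = \sum_{v} \alpha_v v$ with $\alpha_v\ge 0$, where each $v\in S\setminus\{A^*(:,1)\}$ equals $A^*u_v/\|A^*u_v\|_2$ for some non-negative $u_v$ that is not a positive multiple of $e_1$ (otherwise $v$ would equal $A^*(:,1)$ and be removed as a duplicate). Factoring out $A^*$ gives $A^*(:,1)=A^*w$ with $w:=\sum_v (\alpha_v/\|A^*u_v\|_2)\,u_v \ge 0$. Since $A^*$ has full column rank, $w=e_1$. But $w(k)=\sum_v(\alpha_v/\|A^*u_v\|_2)u_v(k)$ is a sum of non-negative terms, so $w=e_1$ forces $u_v(k)=0$ for all $k\ne 1$ whenever $\alpha_v>0$, i.e., such $u_v$ is a non-negative multiple of $e_1$, contradicting the earlier observation. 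For (iii), any other $v\in S$ equals $A^*u/\|A^*u\|_2 = \sum_{k=1}^p (u(k)/\|A^*u\|_2)\, A^*(:,k)$ with non-negative coefficients, and since all $A^*(:,k)$ are in $S$ by (i), $v$ is a conical sum of the other members of $S$.

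Finally, I would argue that the removal loop yields exactly $\{A^*(:,1),\dots,A^*(:,p)\}$ independently of ordering. By (ii) and the fact that shrinking $S$ can only make it harder for a vector to be a conical combination of the remaining ones, each $A^*(:,k)$ survives every iteration. By (iii) and the fact that $\{A^*(:,k)\}_{k=1}^p$ stays in $S$ throughout, whenever a non-column vector $v$ is examined it is still expressible as a conical sum of the columns of $A^*$ present in $S\setminus\{v\}$, so it gets removed. Hence the final $S$ is $\{A^*(:,1),\dots,A^*(:,p)\}$ as a set, and stacking them into $\widehat{A}$ gives a matrix equal to $A^*$ up to a permutation of columns. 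The most delicate point is the extremality argument in (ii); the only subtlety is ensuring that the deduplication step removes all copies of each column direction so that the ``rest of $S$'' really excludes $A^*(:,1)$, which is guaranteed by the explicit removal of duplicated vectors in the algorithm.
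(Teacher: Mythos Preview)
Your proof is correct and follows the same three-part structure as the paper: separability places every column of $A^*$ in $S$, full column rank ensures each such column is conically extreme, and every other vector in $S$ is a conical combination of the columns of $A^*$. If anything, your version is more careful than the paper's---you fully justify the extremality step (ii) by tracking the coefficient vector through the injectivity of $A^*$, whereas the paper simply says a conical representation would ``violate the fact that $A^*$ has full column rank,'' and you also explicitly argue order-independence of the removal loop, which the paper leaves implicit.
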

\begin{proof}
After Step 1-7, Algorithm~\ref{algo:two-layer} produces a set $S$ which contains all nonzero and normalized samples. Besides, the vectors in $S$ are unique because the duplicated ones are removed in Step 7. To prove $\widehat{A}=A^*$ (up to permutation of the columns), we only need to prove that 
\begin{itemize}
    \item (a) All the (normalized) column vectors of $A^*$ are in $S$.
    \item (b) Except the column vectors in $A^*$, every vector in $S$ can be represented as a conical sum of the rest vectors in $S$.
    \item (c) Any column vector in $A^*$ cannot be represented as a conical sum of the rest vectors in $S$.
\end{itemize}

(a) is true because the $M$ matrix satisfies the separability condition. According to Definition~\ref{def:separability}, for every column vector of $A^*$, there is at least one sample $x\in\R^d$ which is a scaled version of that column vector.

To prove (b), first note that all the vectors in $S$ can be represented as a conical combination of the column vectors of $A^*$. This is because $M$ is a non-negative matrix and the samples are $X=A^*M$. From (a), we know that all the column vectors of $A^*$ are also in $S$. Therefore, all the samples, except those that are scaled versions of $A^*$'s columns, can be written as a conical combination of the rest vectors in $S$.

We will prove (c) by contradiction. If a column vector of $A^*$ can be written as a conical combination of the rest vectors in $S$, then it means that this column vector can be represented as a conical combination of the column vectors in $A^*$. This will violate the fact that $A^*$ has full column rank. Hence, any column vector in $A^*$ cannot be represented as a conical sum of the rest vectors in $S$.
\end{proof}

According to Claim~\ref{claim:recover-A}, if $M$ satisfies the separability condition, and $A^*$ has full column rank, then Algorithm~\ref{algo:two-layer} can \emph{exactly} recover $A^*$ (up to permutation and scaling of the column vectors in $A^*$). Once $A^*$ is recovered, estimating $W$ and $b$ is the same problem as learning one-layer ReLU generative model, which can be done by Algorithm~\ref{algo:overall}. One problem with the above approach is that it requires the $M\in\R^{p\times n}$ matrix to satisfy the separability condition. This is true when, e.g., $W$ has full row rank, and the number of samples is $\Omega(2^k)$. Developing sample-efficient algorithms for more general generative models is definitely an interesting direction for future research.

We simulate Algorithm~\ref{algo:two-layer} on a two-layer generative model with $k=p=5$ and $d=10$. We generate $A^*\in\R^{10\times 5}$ as a random Gaussian matrix, $W^*\in\R^{5\times 5}$ as a random orthogonal matrix, and let $b^*$ be zero. Given $n$, we run 100 times of Algorithm~\ref{algo:two-layer}, and each time we use a different set of random samples with size $n$. Table~\ref{table:two-layer} lists the fraction of runs that Algorithm~\ref{algo:two-layer} successfully recovers $A^*$. We see that the probability of success increases as we are given more samples.

\begin{table*}[ht]
\centering
\begin{tabular}{|c|c|c|c|}
\hline
Number of samples $n$ & 50&100&150\\
\hline
Probability of success in 100 runs & 0.30&0.78&0.99\\
\hline
\end{tabular}
\caption[Success probability of learning a two-layer generative model]{We simulate a two-layer generative model: $A^*\in\R^{10\times 5}$ is a random Gaussian matrix, $W^*\in\R^{5\times 5}$ is a random orthogonal matrix, and $b^*=0$. For a fixed number of samples, we run 100 times of Algorithm~\ref{algo:two-layer} with different input samples. This table shows the fraction of runs that Algorithm~\ref{algo:two-layer} successfully recovers $A^*$.}
\label{table:two-layer}
\end{table*}

\subsection{Learning from Noisy Samples}
It is an interesting direction to design algorithms that can learn from noisy samples, e.g., samples of the form $x = \relu(W^*z+b^*) + \xi$, where $\xi\sim \N(0,\sigma^2I_d)$ represents the noise. In that case, Algorithm~\ref{algo:overall} would not work because both parts of our algorithm (i.e., learn from truncated samples, and estimate the pairwise angles) require clean samples. Nevertheless, the above problem is easy when $b^*=0$. This is because we can estimate $\norm{W^*(i,:)}_2$ using the fact that $\E_{z,\xi}[x(i)^2] = \norm{W^*(i,:)}_2^2/2$, and estimate $\theta^*_{ij}$ using the following fact~\cite{CS09}:
\begin{equation*}
    \E_{z,\xi}[x(i)x(j)] = \frac{1}{2\pi}\norm{W^*(i,:)}_2\norm{W^*(j,:)}_2(\sin\left(\theta^*_{ij}) - (\pi-\theta^*_{ij})\cos(\theta^*_{ij})\right).
\end{equation*}

\section{Conclusion}
A popular generative model nowadays is defined by passing a standard Gaussian random variable through a neural network. In this paper we are interested in the following fundamental question: Given samples from this distribution, is it possible to recover the parameters of the neural network? We designed a new algorithm to provably recover the parameters of a single-layer ReLU generative model from i.i.d. samples, under the assumption that the bias vector is non-negative. We analyzed the sample complexity of the proposed algorithm in terms of two error metrics: parameter estimation error and total variation distance. We also showed an interesting connection between learning a two-layer generative model and non-negative matrix factorization. 

While our focus here is parameter recovery, one interesting direction for future work is to see whether one can directly estimate the distribution in some distance without first estimating the parameters. Another interesting direction is to develop provable learning algorithms for the agnostic setting instead of the realizable setting. Besides designing new algorithms, analyzing the existing algorithms, e.g., GANs, VAEs, and reversible generative models, is also an important research direction.

\bibliography{ref.bib}
\bibliographystyle{alpha}

\newpage
\appendix
\section{Proof of Lemma~\ref{lemma:normbias}}
We first restate the lemma and then give the proof.

\begin{app-lemma}
For any $\eps \in (0,1)$ and $\delta\in (0,1)$, Algorithm~\ref{algo:overall} takes $n=\widetilde{O}\left(\frac{1}{\eps^2}\ln(\frac{d}{\delta})\right)$ samples from $\D(W^*, b^*)$ (for some non-negative $b^*$) and outputs $\widehat{b}(i)$ and $\widehat{\Sigma}(i,i)$ for all $i\in [d]$ that satisfy
\begin{equation*}
    (1-\eps)\norm{W^*(i,:)}_2^2\le \widehat{\Sigma}(i,i) 
    \le (1+\eps) \norm{W^*(i,:)}_2^2,\quad
    |\widehat{b}(i) - b^*(i)| \le \eps \norm{W^*(i,:)}_2 
\end{equation*}
with probability at least $1-\delta$.
\end{app-lemma}

\begin{proof}
For a fixed $i\in[d]$, according to Theorem 1 of~\cite{DGTZ18}, given $\widetilde{O}(\ln(d/\delta)/\eps^2)$ truncated samples from $\N(b^*(i), \norm{W^*(i,:)}_2^2, \R_{>0})$, the output of Algorithm~\ref{algo:normbias} satisfies (\ref{res:normbias}) with probability at least $1-\delta/d$. Since $b^*(i)\ge 0$, a sample $x\sim \N(b^*(i), \norm{W^*(i,:)}_2^2)$ satisfies $x>0$ with probability at least 1/2. By Hoeffding's inequality, if we take $\widetilde{O}(\ln(d/\delta)/\eps^2)+O(\ln(d/\delta)) = \widetilde{O}(\ln(d/\delta)/\eps^2)$ samples from $\D(W^*, b^*)$, then we are able to obtain $\widetilde{O}(\ln(d/\delta)/\eps^2)$ truncated samples with probability at least $1-\delta/d$. Therefore, if we take $\widetilde{O}(\ln(d/\delta)/\eps^2)$ samples from $\D(W^*, b^*)$, for a fixed coordinate $i\in [d]$, the output of Algorithm~\ref{algo:overall} satisfies (\ref{res:normbias}) with probability at least $1-2\delta/d$. Lemma~\ref{lemma:normbias} then follows by taking a union bound over all coordinates in $[d]$ and re-scaling $\delta$ to $\delta/2$.
\end{proof}

\section{Proof of Lemma~\ref{lemma:angleEst}}
We first restate the lemma and then give the proof.
\begin{app-lemma}
Let $x\sim \D(W^*, b^*)$, where $b^*$ is non-negative. Suppose that $\widehat{b}\in\R^d$ is non-negative and satisfies $|\widehat{b}(i)-b^*(i)|\le \eps\norm{W^*(i,:)}_2$ for all $i\in[d]$ and some $\eps>0$. Then for all $i\neq j \in [d]$,
\begin{equation*}
    \left\lvert \Pr_{x}[x(i)>\widehat{b}(i)\textnormal{ and }x(j)>\widehat{b}(j)] - \Pr_{x}[x(i)>b^*(i)\textnormal{ and }x(j)>b^*(j)] \right\rvert \le \eps.
\end{equation*}
\end{app-lemma}

\begin{proof}
We first notice that $\widehat{b}$ satisfies
\begin{equation}
    \max(0, b^*(i) - \eps \norm{W^*(i,:)}_2)\le \widehat{b}(i) \le b^*(i)+\eps\norm{W^*(i,:)}_2, \text{ for all }i\in[d].
\end{equation}
To prove Lemma~\ref{lemma:angleEst}, we only need to prove that (\ref{eqn:angleEstErr}) holds when $\widehat{b}$ is substituted by its lower bound as well as the upper bound. We focus on substituting the lower bound here (as the upper bound follows a similar proof). We assume that $\norm{W^*(i,:)}_2 \neq 0$ for all $i\in[d]$ (the proof extends straightforwardly to the setting when this is not true).
\begin{align}
    &\Pr_{x}\left[x(i)> \max(0, b^*(i)-\eps\norm{W^*(i,:)}_2)\textnormal{ and }x(j)>\max(0, b^*(j)-\eps\norm{W^*(j,:)}_2)\right] \nonumber\\
    &\;\;-\Pr_{x}[x(i)>b^*(i)\textnormal{ and }x(j)>b^*(j)] \nonumber\\
    & \overset{(a)}{\le} \Pr_{z\sim\N(0, I_k)}[W^*(i,:)^Tz>-\eps\norm{W^*(i,:)}_2\textnormal{ and }W^*(j,:)^Tz>-\eps\norm{W^*(j,:)}_2] \nonumber\\
    &\;\;\;\;\;\; - \Pr_{z\sim\N(0, I_k)}[W^*(i,:)^Tz>0\textnormal{ and } W^*(j,:)^Tz>0]\nonumber\\
    & = \Pr_{z\sim\N(0, I_k)}[-\eps<\frac{W^*(i,:)^T}{\norm{W^*(i,:)}_2}z\le 0\textnormal{ and }-\eps<\frac{W^*(j,:)^T}{\norm{W^*(j,:)}_2}z\le 0]\nonumber\\
    &\;\;\;\;\;\; + \Pr_{z\sim\N(0, I_k)}[-\eps<\frac{W^*(i,:)^T}{\norm{W^*(i,:)}_2}z\le 0\textnormal{ and }\frac{W^*(j,:)^T}{\norm{W^*(j,:)}_2}z>0]\nonumber\\
    &\;\;\;\;\;\; + \Pr_{z\sim\N(0, I_k)}[\frac{W^*(i,:)^T}{\norm{W^*(i,:)}_2}z>0\textnormal{ and }-\eps<\frac{W^*(j,:)^T}{\norm{W^*(j,:)}_2}z\le 0]\nonumber\\
    & \le \Pr_{z\sim\N(0, I_k)}[-\eps<\frac{W^*(i,:)^T}{\norm{W^*(i,:)}_2}z\le 0 ]
     + \Pr_{z\sim\N(0, I_k)}[-\eps<\frac{W^*(j,:)^T}{\norm{W^*(j,:)}_2}z\le 0 ]\nonumber\\
    & \overset{(b)}{\le } \frac{2}{\sqrt{2\pi}}\eps \le \eps. \nonumber
\end{align}
Here (a) is true because $x(i) = \relu\left(W^*(i,:)^Tz+b^*(i)\right)$ and $b^*$ is non-negative.
Inequality (b) is true because $\frac{W^*(i,:)^T}{\norm{W^*(i,:)}_2}z$ is a one-dimensional Gaussian distribution $\N(0,1)$ and the probability density of $\N(0,1)$ has value no larger than $1/\sqrt{2\pi}$.
\end{proof}

\section{Proof of Lemma~\ref{lemma:cosEst}}
We first restate the lemma and then give the proof.
\begin{app-lemma}
For a fixed pair of $i\neq j\in [d]$, for any $\eps, \delta\in(0,1)$, suppose $\widehat{b}$ satisfies the condition in Lemma~\ref{lemma:angleEst}, given $80\ln(2/\delta)/\eps^2$ samples, with probability at least $1-\delta$,
$|\cos(\widehat{\theta}_{ij}) - \cos(\theta_{ij})| \le \eps$.
\end{app-lemma}

\begin{proof}
For a fixed pair $i\neq j \in [d]$, let $f(x):= \ind(x(i)>\widehat{b}(i)\textnormal{ and }x(j)>\widehat{b}(j))$. Since the indicator function is bounded, Hoeffding's inequality implies that if the number of samples $n\ge \ln(2/\delta)/(2\eps^2)$, then with probability at least $1-\delta$,
\begin{equation}
    \left\lvert\frac{1}{n}\sum_{m=1}^n f(x_m) - \E_{x}[f(x)]\right\rvert \le \eps.
\end{equation}
By Lemma~\ref{lemma:angleEst}, the above equation implies that
\begin{equation}
    \left\lvert\frac{1}{n}\sum_{m=1}^n f(x_m) - \E_{x}[\ind(x(i)>b^*(i)\textnormal{ and }x(j)>b^*(j))]\right\rvert \le 2\eps.
\end{equation}
By Lemma~\ref{lemma:exactAngleEst}, we have $|\widehat{\theta}_{ij} - \theta^*_{ij}| \le 4\pi\eps$. Lemma~\ref{lemma:cosEst} follows from the fact that $\cos(\cdot)$ has Lipschitz constant 1. Re-scaling $\eps$ gives the desired sample complexity.
\end{proof}

\section{Proof of Corollary~\ref{cor:tv_dist}}
We first restate the corollary and then give the proof.
\begin{app-corollary}
Suppose that $W^*\in\R^{d\times d}$ is full-rank. Let $\kappa$ be the condition number of $W^*W^{*T}$. For any $\eps\in(0,1/2]$ and $\delta\in(0,1)$, Algorithm~\ref{algo:overall} takes $n = \widetilde{O}\left(\frac{\kappa^2d^2}{\eps^2}\ln(\frac{d}{\delta})\right)$ samples from $\D(W^*, b^*)$ (for some non-negative $b^*$) and outputs a distribution $\D(\widehat{\Sigma}^{1/2},\widehat{b})$ that satisfies
\begin{equation*}
    \textnormal{TV}\left(\D(\widehat{\Sigma}^{1/2},\widehat{b}), \;\D(W^*, b^*)\right) \le \eps,
\end{equation*}
with probability at least $1-\delta$. Algorithm~\ref{algo:overall} runs in time $ \widetilde{O}\left(\frac{\kappa^2d^4}{\eps^2}\ln(\frac{d}{\delta})\right)$ and space $\widetilde{O}\left(\frac{\kappa^2d^3}{\eps^2}\ln(\frac{d}{\delta})\right)$.
\end{app-corollary}

\begin{proof}
Let $\Sigma = W^*W^{*T}$. We will prove that given $\widetilde{O}\left(\frac{\kappa^2d^2}{\eps^2}\ln(\frac{d}{\delta})\right)$ samples, the output of Algorithm~\ref{algo:overall} satisfies
\begin{equation}
    \norm{\Sigma^{-1/2}(\widehat{b}-b^*)}_2 \le \eps,\; \norm{\Sigma^{-1/2}\widehat{\Sigma}\Sigma^{-1/2}-I}_F \le \eps.
    \label{eqn:dist_to_tv}
\end{equation}
The above implies that the TV distance between $\D(\widehat{\Sigma}^{1/2},\widehat{b})$ and $\D(W^*, b^*)$ is less than $\eps$. To see why, note that 
\begin{align}
    \textnormal{TV}\left(\D(\widehat{\Sigma}^{1/2},\widehat{b}), \;\D(W^*, b^*)\right)
    &\le \textnormal{TV}\left(\N(\widehat{b}, \widehat{\Sigma}), \N(b^*, \Sigma)\right) \nonumber\\
    &\le \sqrt{\textnormal{KL}\left(\N(\widehat{b}, \widehat{\Sigma})||\N(b^*, \Sigma)\right)/2}.
    \label{eqn:tv_kl}
\end{align}
The first inequality follows from the data processing inequality for $f$-divergence given by Lemma~\ref{dpi-ineq} in Appendix~\ref{app:lower-bound-param} (see also~\cite[Fact A.5]{ABHLMP18}): $\textnormal{TV}(f(X), f(Y))\le \textnormal{TV}(X,Y)$ for any function $f$ and random variables $X,Y$ over the same space. The second inequality follows from the Pinsker's inequality~\cite[Lemma 2.5]{Tsy09}. The KL divergence between two Gaussian distributions can be computed as $\textnormal{KL}\left(\N(\widehat{b}, \widehat{\Sigma})||\N(b^*, \Sigma)\right) =$
\begin{equation}
    \frac{1}{2}\left(\textnormal{tr}(\Sigma^{-1}\widehat{\Sigma}-I) - \ln(\det(\Sigma^{-1}\widehat{\Sigma}))+ \norm{\Sigma^{-1/2}(\widehat{b}-b^*)}_2^2\right).
    \label{eqn:kl_dist}
\end{equation}
Let $\lambda_1,...,\lambda_d$ be the eigenvalues of $\Sigma^{-1}\widehat{\Sigma}$. We have
\begin{equation}
    \textnormal{tr}(\Sigma^{-1}\widehat{\Sigma}-I) - \ln(\det(\Sigma^{-1}\widehat{\Sigma})) 
    = \sum_{i=1}^d(\lambda_i -1) - \ln(\Pi_{i=1}^d \lambda_i) 
    = \sum_{i=1}^d (\lambda_i -1-\ln(\lambda_i)).
    \label{eqn:ld_dist}
\end{equation}
Suppose that (\ref{eqn:dist_to_tv}) holds with $\eps\le 1/2$, since $\Sigma^{-1/2}\widehat{\Sigma}\Sigma^{-1/2}$ and $\Sigma^{-1}\widehat{\Sigma}$ have the same eigenvalues, 
\begin{equation}
    \eps^2 \ge \norm{\Sigma^{-1/2}\widehat{\Sigma}\Sigma^{-1/2}-I}_F^2 = \sum_{i=1}^d (\lambda_i-1)^2 \ge \sum_{i=1}^d(\lambda_i-1-\ln(\lambda)),
    \label{eqn:ub_sigma}
\end{equation}
where the last inequality follows from the fact that $x-1-\ln(x)\le (x-1)^2$ for $x\ge 1/2$. Since $\eps\le 1/2$, we have $(\lambda_i-1)^2\le 1/4$, which implies that $\lambda_i\in [1/2,3/2]$. Substituting (\ref{eqn:ub_sigma}) into (\ref{eqn:ld_dist}), and combining (\ref{eqn:kl_dist}) and (\ref{eqn:tv_kl}) give that the $\textnormal{TV}\left(\D(\widehat{\Sigma}^{1/2},\widehat{b}), \;\D(W^*, b^*)\right) \le \eps$.

The only thing left is to prove that (\ref{eqn:dist_to_tv}) holds. According to Theorem~\ref{theorem:main}, given $\widetilde{O}\left(\frac{1}{\eta^2}\ln(\frac{d}{\delta})\right)$ samples, we have
\begin{equation}
    \norm{\widehat{\Sigma} - \Sigma}_F \le \eta \norm{W^*}_F^2,\quad 
    \norm{\widehat{b} - b^*}_2 \le \eta \norm{W^*}_F.
\end{equation}
We can bound $\norm{\Sigma^{-1/2}(\widehat{b}-b^*)}_2$ and $\norm{\Sigma^{-1/2}\widehat{\Sigma}\Sigma^{-1/2}-I}_F$ as
\begin{align*}
    \norm{\Sigma^{-1/2}(\widehat{b}-b^*)}_2 
    &\le \norm{\Sigma^{-1/2}}_2\norm{\widehat{b} - b}_2 \le \norm{\Sigma^{-1/2}}_2 \eta\norm{W^*}_F \le \eta \sqrt{\kappa d}. \\
    \norm{\Sigma^{-1/2}\widehat{\Sigma}\Sigma^{-1/2}-I}_F 
    & = \norm{\Sigma^{-1/2}(\widehat{\Sigma}-\Sigma)\Sigma^{-1/2}}_F \le \norm{\Sigma^{-1/2}}_2^2 \norm{\widehat{\Sigma}-\Sigma}_F \le \eta\kappa d.
\end{align*}
Now setting $\eta = \eps/(\kappa d)$ gives (\ref{eqn:dist_to_tv}). 
\end{proof}

\section{Proof of Theorem~\ref{thm:lower-bound-param}}\label{app:lower-bound-param}
To establish a lower bound for parameter estimation, the key step is to construct a local packing set such that their parameter distance is large but their KL divergence is small (and hence it is hard to distinguish them without observing many samples). We remark that our way of constructing this local packing is similar to the one used in proving the minimax rate for Gaussian mean estimation (see, e.g.,~\cite{Duc19}), despite the fact that our class of distributions is not Gaussian.

We will start by stating three results in information theory and statistics. Proofs of Lemma~\ref{GV-bound},~\ref{fano-ineq}, and~\ref{dpi-ineq} can be found in, e.g.,~\cite{Duc19}. 

\begin{lemma} (Gilbert-Varshamov bound). 
There is a subset $\V$ of the $d$-dimensional hypercube $\{0,1\}^d$ of size $|\V|\ge \exp(d/8)$ such that the $\ell_1$-distance 
\begin{equation}
    \norm{v-v'}_1 = \sum_{j=1}^d \ind(v_j\neq v_j')\ge d/4,\quad\textnormal{for any } v, v'\in \V.
\end{equation}
\label{GV-bound}
\end{lemma}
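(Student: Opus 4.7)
The plan is to construct $\V$ greedily and control its size by comparing the covering number of $\{0,1\}^d$ by small Hamming balls against the total cardinality $2^d$. Concretely, I would pick an arbitrary point $v_1\in\{0,1\}^d$, and then iteratively add any point $v$ to the current set $\V_k=\{v_1,\ldots,v_k\}$ provided $\norm{v-v_i}_1>d/4$ for every $i\le k$. Stop when no such $v$ exists. By construction, every pair in the resulting $\V$ has Hamming distance strictly greater than $d/4$, so the packing property is automatic.

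The cardinality bound comes from the observation that, at termination, the Hamming balls $B(v_i,d/4):=\{u:\norm{u-v_i}_1\le d/4\}$ cover $\{0,1\}^d$. Hence
\[
2^d \;\le\; |\V|\cdot \bigl|B(0,d/4)\bigr|,
\]
and it suffices to show $|B(0,d/4)|\le 2^d\cdot e^{-d/8}$. For this I would use Chernoff/Hoeffding: if $X\sim\mathrm{Bin}(d,1/2)$ then $|B(0,d/4)|/2^d=\Pr[X\le d/4]$, and Hoeffding's inequality for a sum of $d$ independent $\{0,1\}$-Bernoulli$(1/2)$ random variables gives $\Pr[X\le d(1/2-1/4)]\le \exp(-2d(1/4)^2)=\exp(-d/8)$. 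Rearranging yields $|\V|\ge \exp(d/8)$, as required.

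The main obstacle is really just getting the constants in the Chernoff estimate to match the exponent $d/8$ exactly; nothing else in the argument is delicate. If the Hoeffding constant happened to be slightly too weak, I would fall back on the entropy bound $\sum_{i\le d/4}\binom{d}{i}\le 2^{dH(1/4)}$ with $H(1/4)=\tfrac{3}{4}\log_2\tfrac{4}{3}+\tfrac{1}{4}\log_2 4$, and verify that $(1-H(1/4))\ln 2 \ge 1/8$, which again gives the desired $|\V|\ge e^{d/8}$. Either route yields the stated bound without further work.
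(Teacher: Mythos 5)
The paper does not prove this lemma; it simply cites \cite{Duc19} for all three preliminary information-theoretic facts. Your greedy maximal-packing argument, with the covering inequality $2^d\le|\V|\cdot|B(0,d/4)|$ and the Hoeffding estimate $\Pr[\mathrm{Bin}(d,1/2)\le d/4]\le\exp(-2d(1/4)^2)=\exp(-d/8)$, is the standard textbook proof of the Gilbert--Varshamov bound (and is essentially what the cited reference gives), and your constants check out exactly, so the proposal is correct.
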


\begin{lemma} (Fano's inequality).
Let $V$ be a random variable taking values uniformly in the finite set $\V$ with cardinality $|\V|\ge 2$. Conditioned on $V=v$, we draw a sample $X\sim P_v$. The KL divergence of the distributions $\{P_v\}_{v\in \V}$ satisfy
\begin{equation}
    \kldist{P_v}{P_{v'}} \le \beta, \quad\textnormal{for any }v,v'\in\V.
\end{equation}
For any Markov chain $V\to X\to \widehat{V}$,
\begin{equation}
    \Pr[\widehat{V}\neq V] \ge 1-\frac{\beta+\ln(2)}{\ln(|\V|)}.
\end{equation}
\label{fano-ineq}
\end{lemma}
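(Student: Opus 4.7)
The plan is to prove Fano's inequality by the standard route: sandwich $H(V \mid \widehat{V})$ between an entropy chain-rule upper bound and a data-processing lower bound, then invoke the pairwise KL hypothesis to control the mutual information that shows up in the lower bound.

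For the upper bound on $H(V \mid \widehat{V})$, I would introduce the binary error indicator $E = \ind(\widehat{V} \neq V)$ and expand $H(E, V \mid \widehat{V})$ via the chain rule in both orders, obtaining $H(V \mid \widehat{V}) + H(E \mid V, \widehat{V}) = H(E \mid \widehat{V}) + H(V \mid E, \widehat{V})$. Since $E$ is a deterministic function of $(V, \widehat{V})$, the term $H(E \mid V, \widehat{V})$ vanishes. The term $H(E \mid \widehat{V})$ is at most $\ln 2$ because $E$ is binary. For $H(V \mid E, \widehat{V})$: on $\{E = 0\}$ we have $V = \widehat{V}$, contributing nothing, while on $\{E = 1\}$ the variable $V$ takes at most $|\V|$ values. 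Combining these yields $H(V \mid \widehat{V}) \le \ln 2 + \Pr[\widehat{V} \neq V]\cdot \ln|\V|$.

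For the matching lower bound, uniformity of $V$ gives $H(V) = \ln|\V|$, so $H(V \mid \widehat{V}) = \ln|\V| - I(V; \widehat{V})$. The Markov chain $V \to X \to \widehat{V}$ and the data processing inequality then yield $I(V; \widehat{V}) \le I(V; X)$. To bound $I(V; X) \le \beta$ using only the pairwise KL assumption, I would use the identity $\E_V[\kldist{P_V}{Q}] = I(V; X) + \kldist{\bar P}{Q}$, where $\bar P = \frac{1}{|\V|}\sum_v P_v$ is the marginal of $X$; this is a direct computation after splitting $\ln(p_V/q) = \ln(p_V/\bar p) + \ln(\bar p/q)$ and taking expectations. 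In particular $I(V; X) \le \E_V[\kldist{P_V}{Q}]$ for every distribution $Q$. Taking $Q = P_{v'}$ and averaging over $v' \in \V$ gives $I(V; X) \le \frac{1}{|\V|^2}\sum_{v, v'} \kldist{P_v}{P_{v'}} \le \beta$.

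Combining the two bounds produces $\ln|\V| - \beta \le H(V \mid \widehat{V}) \le \ln 2 + \Pr[\widehat{V} \neq V]\cdot \ln|\V|$, and rearranging gives the claim $\Pr[\widehat{V} \neq V] \ge 1 - (\beta + \ln 2)/\ln|\V|$. The only nontrivial ingredient is the reduction of $I(V; X)$ to an average of pairwise KL divergences via the ``centering around any reference'' trick; the rest is routine chain-rule bookkeeping together with data processing, so I do not anticipate further obstacles.
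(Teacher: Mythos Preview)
Your proof is correct and is the standard derivation of Fano's inequality: the chain-rule bound on $H(V\mid\widehat V)$ via the error indicator, combined with $H(V\mid\widehat V)=\ln|\V|-I(V;\widehat V)$, data processing along the Markov chain, and the ``center at any reference $Q$'' identity to pass from $I(V;X)$ to an average of pairwise KL divergences. The paper does not supply its own proof of this lemma; it simply states the result and cites a standard reference, so there is nothing further to compare.
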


\begin{lemma} (Data processing inequality for $f$-divergence).
Let $f_1$ and $f_2$ be the distributions of two random variables $x_1$ and $x_2$. Let $g_1$ and $g_2$ be the distributions of two random variables $T(x_1)$ and $T(x_2)$, where $T(\cdot)$ is any function. For any $f$-divergence $D_f\infdivx{\cdot}{\cdot}$, we have
\begin{equation}
    D_f\infdivx{f_1}{f_2} \ge D_f\infdivx{g_1}{g_2}.
\end{equation}
\label{dpi-ineq}
\end{lemma}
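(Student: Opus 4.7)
The plan is to deduce the data processing inequality from the convexity of $f$ via a single application of conditional Jensen, after identifying the Radon–Nikodym derivative of the pushforward measure as a conditional expectation. Recall that for a convex $f$ with $f(1)=0$ and distributions $P,Q$ with $P\ll Q$, the $f$-divergence is $D_f\infdivx{P}{Q}=\int f(dP/dQ)\,dQ$; when $P\not\ll Q$ the LHS is $+\infty$ and the inequality is vacuous, so I restrict attention to the absolutely continuous case. Writing $P=f_1$, $Q=f_2$ on a space $\mathcal{X}$ and $P_T=g_1$, $Q_T=g_2$ for their pushforwards on $\mathcal{Y}$ under the measurable map $T:\mathcal{X}\to\mathcal{Y}$, note first that $P\ll Q$ immediately implies $P_T\ll Q_T$, since $Q_T(A)=0$ gives $Q(T^{-1}(A))=0$ and hence $P(T^{-1}(A))=P_T(A)=0$.

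The key computation is the identification $dP_T/dQ_T = \E_Q[L\mid T]$, where $L:=dP/dQ$ and the conditional expectation is taken under $Q$. I would verify this by checking the defining property of the pushforward density: for any measurable $A\subseteq\mathcal{Y}$,
\[
P_T(A) \;=\; P(T^{-1}(A)) \;=\; \int_{T^{-1}(A)} L\,dQ \;=\; \int_{\mathcal{X}} \ind(T(x)\in A)\,L(x)\,dQ(x) \;=\; \int_A \E_Q[L\mid T=y]\,dQ_T(y),
\]
where the last equality is the change-of-variables formula combined with the tower property of conditional expectation (interpreting $\E_Q[L\mid T=y]$ as a Borel function of $y$ via the Doob–Dynkin lemma).

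With this identification in hand, the proof is a one-line application of Jensen's inequality to the convex function $f$: for $Q_T$-a.e.\ $y$,
\[
f\bigl(\tfrac{dP_T}{dQ_T}(y)\bigr) \;=\; f\bigl(\E_Q[L\mid T=y]\bigr) \;\le\; \E_Q\bigl[f(L)\mid T=y\bigr].
\]
Integrating against $Q_T$ and applying the tower property once more,
\[
D_f\infdivx{P_T}{Q_T} \;=\; \int f\bigl(\tfrac{dP_T}{dQ_T}(y)\bigr)\,dQ_T(y) \;\le\; \int \E_Q[f(L)\mid T=y]\,dQ_T(y) \;=\; \E_Q[f(L)] \;=\; D_f\infdivx{P}{Q},
\]
which is exactly the claimed inequality.

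The only real obstacle is the measure-theoretic bookkeeping of the pushforward derivative; once that is written down cleanly, convexity of $f$ and Jensen's inequality do all the work. No properties of $T$ beyond measurability are used, so the statement holds for arbitrary (possibly non-injective) functions $T$, and in fact extends with the same proof to Markov kernels by replacing $T$ with a stochastic channel and $\E_Q[\,\cdot\mid T=y]$ with the corresponding channel-averaged conditional expectation.
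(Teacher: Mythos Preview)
Your argument is correct and is the standard textbook proof of the data processing inequality for $f$-divergences: identify $dP_T/dQ_T$ as $\E_Q[dP/dQ\mid T]$ and apply conditional Jensen using the convexity of $f$. The paper, however, does not actually prove this lemma; it is stated as one of three background results whose proofs are deferred to a reference (Duchi's notes, cited as~\cite{Duc19}). So there is no ``paper's own proof'' to compare against---you have supplied a self-contained proof where the paper only quoted the statement.
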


We are now ready to prove Theorem~\ref{thm:lower-bound-param}, which is restated below.
\begin{app-theorem}
Let $\sigma>0$ be a fixed and known scalar. Let $I_d$ be the identity matrix in $\R^d$. Let $S:=\{\D(W, b): W = \sigma I_d, b\in \R^d \textnormal{ non-negative}\}$ be a class of distributions in $\R^d$. Any algorithm that learns $S$ to satisfy $\norm{\widehat{b}-b^*}_2\le \eps \norm{W^*}_F$ with success probability at least 2/3 requires $\Omega(\frac{1}{\epsilon^2})$ samples. 
\end{app-theorem}

\begin{proof}
Let $\V\subset \{0,1\}^d$ be a finite set satisfying the property in Lemma~\ref{GV-bound}. Given an $\eps \in (0,1)$, we can construct a finite set of distributions $\{P_v\}_{v\in V}$ as follows: 
\begin{equation}
    P_v = \D(\sigma I_d, b_v), \textnormal{ where } b_v = 6\eps\sigma v.
\end{equation}
Clearly $\{P_v\}_{v\in V}$ belong to the class of the distributions that we are interested in. Furthermore, they satisfy two properties:
\begin{itemize}
    \item Property 1: $\norm{b_v - b_{v'}}_2 \ge 3\eps\sigma \sqrt{d}$ and $|\V| \ge \exp(d/8)$.
    \item Property 2: $\kldist{P_v}{P_{v'}} \le 18d\eps^2$.
\end{itemize}

Assuming that the above two properties hold, we can use Fano's inequality (Lemma~\ref{fano-ineq}) to obtain a sample complexity lower bound for learning $\{P_v\}_{v\in\V}$. Let $V$ be a random variable taking values uniformly in $\V$. Conditioned on $V=v$, we draw $n$ i.i.d. samples $X^n \sim P_v^n$, where $P_v^n$ represents a product distribution of $n$ $P_v$'s. Given $X^n$, our goal is to output an index $\hat{v}\in\V$. By Lemma~\ref{fano-ineq}, any estimator will suffer an estimation error larger than
\begin{equation}
    \Pr[\widehat{V}\neq V] \ge 1-\frac{18nd\eps^2+\ln(2)}{d/8},
    \label{fano-param-pack-set}
\end{equation}
which follows from the fact that $|\V| \ge \exp(d/8)$ (Property 1) and $\textnormal{KL}(P_v^n || P_{v'}^n) = n\textnormal{KL}(P_v || P_{v'}) \le 18nd\eps^2$ (Property 2). Eq. (\ref{fano-param-pack-set}) implies that any estimator that estimates the index correctly with probability at least 2/3 must observe $\Omega(\frac{1}{\eps^2})$ samples. Furthermore, by Property 1, $\norm{b_v - b_{v'}}_2 \ge 3\eps\sigma \sqrt{d}$, any algorithm that learns $S$ to satisfy $\norm{\widehat{b}-b^*}_2\le \eps\norm{W^*}_F =\eps\sigma \sqrt{d}$ can be used to estimate $\V$ (we can just choose $\widehat{v}\in \V$ such that $b_{\widehat{v}}$ is closest to $\widehat{b}$). Therefore, any algorithm that learns $S$ to satisfy $\norm{\widehat{b}-b^*}_2\le \eps \norm{W^*}_F$ with success probability at least 2/3 requires $\Omega(\frac{1}{\eps^2})$ samples.

The only thing left is to show that Property 1 and 2 hold. Property 1 follows from Lemma~\ref{GV-bound} and the way we construct $P_v$. Property 2 is true because of the following two facts.
\begin{itemize}
    \item Fact 1: The KL-divergence between two Gaussian distributions can be computed as 
    \begin{equation}
        \kldist{\N(b_v, \sigma^2I_d)}{\N(b_{v'}, \sigma^2 I_d)} =\frac{\norm{b_v-b_{v'}}_2^2}{2\sigma^2} = 18d\eps^2.
    \end{equation}
    \item Fact 2: $\kldist{P_v}{P_{v'}} \le \kldist{\N(b_v, \sigma^2I_d)}{\N(b_{v'}, \sigma^2 I_d)}$, which follows from Lemma~\ref{dpi-ineq} and the fact that KL-divergence is an instance of $f$-divergence.
\end{itemize}
\end{proof}

\section{Proof of Theorem~\ref{thm:lower-bound-dist}}\label{app:lower-bound-dist}
We first restate the theorem, and then give the proof.
\begin{app-theorem}
Let $S:=\{\D(W,0): W\in\R^{d\times d} \textnormal{ full rank}\}$ be a set of distributions in $\R^d$. Any algorithm that learns $S$ within total variation distance $\eps$ and success probability at least 2/3 requires $\Omega(\frac{d}{\eps^2})$ samples. 
\end{app-theorem}

\begin{proof}
Similar to the proof of Theorem~\ref{thm:lower-bound-param}, we construct a local packing of $S$ for which their pairwise TV distance is large while their KL-divergence is small. Let $\V\subset \{0,1\}^d$ be a finite set satisfying the property in Lemma~\ref{GV-bound}. Given an $\eps \in (0,1)$, define $\lambda = C\cdot\eps/\sqrt{d}$, where $C$ is a universal constant to be specified later, we can construct a finite set of distributions $\{P_v\}_{v\in V}$ as follows: 
\begin{equation}
    P_v = \D(W_v, 0), \textnormal{ where } W_v = I_d+\lambda\cdot \textnormal{diag}(v).
\end{equation}
Here $\textnormal{diag}(\cdot): \R^d \to \R^{d\times d}$ defines a diagonal matrix. This finite set of distributions satisfies two properties:
\begin{itemize}
    \item Property 1: $\textnormal{TV}(P_v, P_{v'})\ge 3\eps$ and $|\V| \ge \exp(d/8)$.
    \item Property 2: $\kldist{P_v}{P_{v'}} = O(\eps^2)$.
\end{itemize}
Given the above two properties, we can use Fano's inequality (Lemma~\ref{fano-ineq}) in a way similar to the proof of Theorem~\ref{thm:lower-bound-param} to conclude that any estimator that identifies $P_v$ from i.i.d. samples with success probability at least 2/3 must require $\Omega(d/\eps^2)$ samples. Since $\textnormal{TV}(P_v, P_{v'})\ge 3\eps$, any algorithm that learns $S$ within TV distance $\eps$ can be used to estimate $\{P_v\}_{v\in \V}$ (we can just choose $P_v$ that has the smallest TV distance to the output of the algorithm). This implies that any algorithm that learns $S$ within TV distance $\eps$ with success probability at least 2/3 requires $\Omega(d/\eps^2)$ samples.

The only thing left is to show that the two properties hold for our packing set $\{P_v\}_{v\in\V}$. To prove Property 2, note that 
\begin{equation}
    \kldist{P_v}{P_{v'}} \overset{(a)}{\le} \kldist{\N(0, W_vW_v^T)}{\N(0, W_{v'}W_{v'}^T)} \overset{(b)}{=} O(\lambda^2d) = O(\eps^2),
\end{equation}
where (a) follows from Lemma~\ref{dpi-ineq} and the fact that KL-divergence belongs to $f$-divergence; (b) follows from exactly computing the KL-divergence between the two Gaussian distributions. Before computing that, we need a few more notations. Specifically, let $S_v =\{i\in[d]: W_v(i,i)=1+\lambda\}$ be the set of coordinates that the corresponding diagonal entry of $W_v$ is $1+\lambda$. We use $S_v-S_{v'} = \{i\in S_v: i\neq S_{v'}\}$ to denote the difference between two sets. For simplicity, we write $\Sigma_v = W_vW_v^T$. Now we can compute the KL-divergence between the two Gaussian distributions as
\begin{align*}
    && \;&2\kldist{\N(0, W_vW_v^T)}{\N(0, W_{v'}W_{v'}^T)} \\
    && = \;&\textnormal{Tr}\left(\Sigma_{v'}^{-1}\Sigma_v-I_d \right) + \ln\left(\textnormal{det}(\Sigma_{v'})\right) - \ln\left(\textnormal{det}(\Sigma_v)\right)\\
    && = \;& |S_{v}-S_{v'}|\left((1+\lambda)^2-1\right)+|S_{v'}-S_{v}|\left(\frac{1}{(1+\lambda)^2}-1\right) \\
    &&\;& + 2|S_{v'}|\ln(1+\lambda)-2|S_v|\ln(1+\lambda)\\
    && \overset{(a)}{\le} \;& |S_{v}|\left[(1+\lambda)^2-1-2\ln(1+\lambda)\right] + |S_{v'}|\left[2\ln(1+\lambda)+\frac{1}{(1+\lambda)^2}-1\right]\\
    && \overset{(b)}{\le} \;& |S_{v}|\left(2\lambda+\lambda^2-\frac{2\lambda}{1+\lambda}\right) + |S_{v'}|\left(2\lambda -\frac{2\lambda+\lambda^2}{(1+\lambda)^2}\right)\\
    && = \;& |S_{v}|\frac{3\lambda^2+\lambda^3}{1+\lambda} 
    + |S_{v'}|\frac{3\lambda^2+2\lambda^3}{(1+\lambda)^2}\\
    && \overset{(c)}{=} \;& O(d\lambda^2) = O(\eps^2),
\end{align*}
where (a) follows from $|S_v|\le |S_v-S_{v'}|$, (b) follows from $\ln(1+x)\le x$, and (c) is true because $|S_v|\le d$ and $\lambda \in (0,1)$. Substituting $\lambda=O(\eps/\sqrt{d})$ gives the final result.

To prove Property 1, note that $|\V|\le \exp(d/8)$ directly follows from Lemma~\ref{GV-bound}. The key challenge lies in proving a lower bound for $\textnormal{TV}(P_v, P_{v'})$. Note that the data-processing inequality (i.e., Lemma~\ref{dpi-ineq}) only implies that $\textnormal{TV}(P_v, P_{v'})\le \textnormal{TV}(\N(0, W_vW_v^T), \N(0, W_{v'}W_{v'}^T))$, so we cannot use the TV distance for Gaussian to obtain a lower bound on the TV distance for rectified Gaussian. Our proof strategy instead is to directly compute the TV distance for the specially-constructed $\{P_v\}_{v\in\V}$ (computing the exact TV distance is hard for general rectified Gaussian distributions). Specifically, let $\Sigma_v=W_vW_v^T$, our proof uses the following two facts:
\begin{itemize}
    \item Fact 1: $\textnormal{TV}(\N(0,\Sigma_v), \N(0, \Sigma_{v'})) \ge 0.01\norm{\Sigma_v^{-1}\Sigma_{v'} -I_d}_F \ge C'\cdot\lambda\sqrt{d}$, where $C'$ is a universal constant.
    \item Fact 2: Let $Q_v$ be the probability density function of a multivariate normal distribution $\N(0,\Sigma_v)$. Let $\R_{>0}^d = \{x\in\R^d: x>0 \textnormal{ coordinate-wise}\}$ be the (open) positive orthant. Then 
    \begin{equation*}
        \norm{Q_v-Q_{v'}}_1 = \int_{\R^d}|Q_v(x)-Q_{v'}(x)|\diff x =2^d \int_{\R_{>0}^d} |Q_v(x)-Q_{v'}(x)|\diff x.
    \end{equation*}
\end{itemize}

The first inequality in Fact 1 follows from~\cite[Theorem 1.1]{DMR18}. The second inequality follows from our definition of $\Sigma_v$. Specifically, the diagonal entry of $\Sigma_{v}$ is either $1$ or $1+\lambda$. By Lemma~\ref{GV-bound}, we know that $\Sigma_v$ and $\Sigma_{v'}$ have at least $d/4$ different diagonal entries. Since the total variation distance is symmetric, i.e., $\textnormal{TV}(\N(0,\Sigma_v), \N(0, \Sigma_{v'}))= \textnormal{TV}(\N(0,\Sigma_{v'}), \N(0, \Sigma_{v}))$, we can w.l.o.g assume that among the diagonal entries that $\Sigma_v$ is different from $\Sigma_{v'}$, $\Sigma_{v'}$ has more entries with value $1+\lambda$ than entries with value $1$. This then implies that $\norm{\Sigma_v^{-1}\Sigma_{v'} -I_d}_F = \Omega(\lambda\sqrt{d})$. 

Fact 2 is true because $\N(0,\Sigma_v)$ has zero mean and diagonal covariance matrix, and hence the value of $Q_v(x)$ is invariant to the sign of $x$'s coordinates.

Now we prove a lower bound on $\textnormal{TV}(P_v, P_{v'})$, assuming that are all the $d$ diagonal entries of $\Sigma_{v}$ and $\Sigma_{v'}$ are different. Let $\Omega \subseteq [d]$ be any subset of the $d$ coordinates. For any $\Omega$, let $x_{\Omega}\in\R^{|\Omega|}$ be the sub-vector of $x\in\R^d$ over the coordinates in $\Omega$. Let $\Omega^c = [d]-\Omega$ be its complement. We can re-write $\textnormal{TV}(P_v, P_{v'})$ as a summation of integrals, where each integral is over the space $A_{\Omega}=\{x\in\R^d: x_{\Omega}>0,\; x_{\Omega^c}=0\}$:
\begin{equation}
    \textnormal{TV}(P_v, P_{v'}) = \norm{P_v-P_{v'}}_1 = \sum_{\Omega} \int_{x\in A_{\Omega}} |P_v(x)-P_{v'}(x)|\diff x.
    \label{tv-split}
\end{equation}
We now give a lower bound for every integral. Let $\Sigma_{v,\Omega} \in \R^{|\Omega|\times |\Omega|}$ be the sub-matrix of $\Sigma_v$ over the coordinates in $\Omega$. Since $\Sigma_v$ has zero mean and diagonal covariance matrix, for any $\Omega\subseteq [d]$ and any $x\in A_{\Omega}$, we have $P_v(x) = (\frac{1}{2})^{|\Omega^c|}P_{v,\Omega}(x_{\Omega})$, where $P_{v, \Omega}$ is the probability density function of the normal distribution $\N(0, \Sigma_{v,\Omega})$. By Fact 1 and 2, we have
\begin{align}
    \int_{x\in A_{\Omega}} |P_v(x)-P_{v'}(x)|\diff x 
    & = (\frac{1}{2})^{|\Omega^c|}\cdot \frac{1}{2^{|\Omega|}} \textnormal{TV}(\N(0,\Sigma_{v,\Omega}), \N(0,\Sigma_{v', \Omega})) \nonumber\\
    & \ge \frac{C'\cdot \lambda\sqrt{|\Omega|}}{2^d}.
    \label{tv-surface}
\end{align}
Combining (\ref{tv-split}) and (\ref{tv-surface}) gives
\begin{align}
   \textnormal{TV}(P_v, P_{v'}) 
    &\ge \sum_{i=0}^d \binom{d}{i}\frac{C'\cdot \lambda\sqrt{i}}{2^d} \nonumber \\
    &\ge \sum_{i=\floor{d/2}}^{d} \binom{d}{i}\frac{C'\cdot \lambda\sqrt{\floor{d/2}}}{2^d}\nonumber\\
    &\overset{(a)}{\ge} \frac{C'\cdot \lambda\sqrt{\floor{d/2}}}{2^d} \frac{1}{2} \sum_{i=0}^{d}\binom{d}{i}\nonumber\\
    &\overset{(b)}{=}  \frac{C'\cdot \lambda\sqrt{\floor{d/2}}}{2} \overset{(c)}{=} 3\eps, \label{eqn:tv_bound}
\end{align}
where (a) follows from the fact that $\binom{d}{i}=\binom{d}{d-i}$, (b) is true because $\sum_i\binom{d}{i}=2^d$, and (c) holds if we choose $\lambda = C\cdot \eps/\sqrt{d}$ with a proper constant $C$.

So far we have proved that $\textnormal{TV}(P_v, P_{v'})\ge 3\eps$ when all the $d$ diagonal entries of $\Sigma_{v}$ and $\Sigma_{v'}$ are different. The proof can be easily extended when only a subset of their diagonal entries are different. Let $\Omega\subset [d]$ be the subset of $d$ diagonal entries that $\Sigma_{v}$ and $\Sigma_{v'}$ are different. By Lemma~\ref{GV-bound}, we know that $|\Omega|\ge d/4$. The definition of TV distance gives 
\begin{align}
    \textnormal{TV}(P_v, P_{v'}) 
    &= \int_x |P_v(x)-P_{v'}(x)|\diff x  \nonumber \\
    &\overset{(a)}{=} \int_{x^{\Omega^c}}\int_{x^{\Omega}} |P_{v^{\Omega}}(x^{\Omega})P_{v^{\Omega^c}}(x^{\Omega^c})-P_{(v')^{\Omega}}(x^{\Omega})P_{(v')^{\Omega^c}}(x^{\Omega^c})|\diff x^{\Omega}\diff x^{\Omega^c} \nonumber\\
    &\overset{(b)}{=} \int_{x^{\Omega}} |P_{v^{\Omega}}(x^{\Omega})-P_{(v')^{\Omega}}(x^{\Omega})|\diff x^{\Omega} \int_{x^{\Omega^c}} P_{v^{\Omega^c}}(x^{\Omega^c}) \diff x^{\Omega^c}\nonumber\\
    & = \int_{x^{\Omega}} |P_{v^{\Omega}}(x^{\Omega})-P_{(v')^{\Omega}}(x^{\Omega})|\diff x^{\Omega} \nonumber\\
    & = \textnormal{TV}(P_{v^{\Omega}}, P_{(v')^{\Omega}}). 
    \label{eqn:tv_reduce}
\end{align}
Here equality (a) uses the fact that $P_v$ and $P_{v'}$ have independent coordinates as $\Sigma_{v}$ and $\Sigma_{v'}$ are diagonal matrices. Equality (b) follows from the definition of $\Omega$: the diagonal entries in $\Omega^c$ are the same for $\Sigma_{v}$ and $\Sigma_{v'}$, and hence, $P_{v^{\Omega^c}}(x^{\Omega^c})=P_{(v')^{\Omega^c}}(x^{\Omega^c})$. 

By (\ref{eqn:tv_reduce}), we have proved that the TV distance between $P_v$ and $P_{v'}$ equals the TV distance between the two distributions over the coordinates in $\Omega$. By definition, $\Sigma_{v^{\Omega}}\in\R^{|\Omega|\times|\Omega|}$ and $\Sigma_{(v')^{\Omega}}\in\R^{|\Omega|\times|\Omega|}$ have different diagonal entries, and $|\Omega|\ge d/4$, we can use the same proof in (\ref{eqn:tv_bound}) to show that $\textnormal{TV}(P_{v^{\Omega}}, P_{(v')^{\Omega}})\ge 3\eps$ for small enough $\lambda$.
\end{proof}

\end{document}